\renewcommand{\arraystretch}{1.2}
\newcolumntype{H}{>{\setbox0=\hbox\bgroup}c<{\egroup}@{}}
\tikzstyle{state}+=[minimum size = 6mm, inner sep=0,outer sep=1]
\tikzset{->,>=stealth'}
\newcommand{\Reach}{\mathsf{Reach}}
\newcommand{\policy}{\sigma}
\newcommand\tool[1]{{\scshape #1}\xspace}
\newcommand{\dtcontrol}{\tool{dtControl}}
\newcommand{\prism}{\tool{Prism}}
\newcommand{\storm}{\tool{Storm}}
\newcommand{\scots}{\tool{Scots}}
\newcommand{\uppaal}{\tool{UppAal}}
\newcommand{\buddy}{\tool{BuDDy}}
\algrenewcommand\algorithmicrequire{\textbf{Input:}}
\algrenewcommand\algorithmicensure{\textbf{Output:}}
\pgfplotsset{compat=1.18}
\newcommand{\inlineheadingbf}[1]{\smallskip\noindent{\bfseries #1.}}
\newcommand{\cwcomment}[1]{\todo[color=red!20]{\textbf{\textsf{CW:}} #1}}
\newcommand{\ra}{\ensuremath{\rightarrow}\xspace}
\newcommand{\lsem}{\llbracket}
\newcommand{\rsem}{\rrbracket}
\newcommand{\Sem}[1]{\lsem#1\rsem}
\newcommand{\en}[1]{\enskip#1\enskip}
\newcommand{\Bool}{\mathbb{B}}
\newcommand{\Nat}{\mathbb{N}}
\newcommand{\Rat}{\mathbb{Q}}
\newcommand{\Real}{\mathbb{R}}
\newcommand{\Pred}{\mathbb{P}(\Var,\Dom)}
\newcommand{\Dom}{\mathbb{D}}
\newcommand{\power}[1]{2^{#1}}
\newcommand{\true}{\mathtt{true}}
\newcommand{\Act}{\mathit{Act}}
\newcommand{\Var}{\mathit{Var}}
\newcommand{\PVar}{\mathit{PVar}}
\newcommand{\Eval}{\mathit{Eval}(\Var,\Dom)}
\newcommand{\dec}{\ensuremath{\mathtt{succ}}\xspace}
\newcommand{\pdd}{\ensuremath{\mathcal{{P}}}\xspace}
\newcommand{\bdd}{\ensuremath{\mathcal{{B}}}\xspace}
\newcommand*\cir[1]{\tikz[baseline=(char.base),font=\small\bfseries]{%
		\node[shape=circle,draw,inner sep=0.5pt] (char) {{#1}};}}
\newcommand*\dcir[1]{\tikz[baseline=(char.base),font=\small\bfseries\color{white}]{%
		\node[shape=circle,draw,inner sep=0.5pt,fill=black] (char) {{#1}};}}
\newcommand*\circled[1]{\kern-2.5em%
	\put(0,3){\color{white}\circle*{11}}\put(0,3){\circle{9}}%
	\put(-2.8,0){\color{black}\bfseries\small#1}~~}
\newcommand*\darkcircled[1]{\kern-2.5em%
	\put(0,3){\color{black}\circle*{9}}\put(0,3){\circle{9}}%
	\put(-2.8,0){\color{white}\bfseries\small#1}~~}
\newcommand{\paragraphh}[1]{\smallskip\noindent\textbf{#1}}
\begin{document}
	\title{Explaining Control Policies through Predicate Decision Diagrams}

\author{Debraj Chakraborty}
\email{chakraborty@fi.muni.cz}
\orcid{0000-0003-0978-4457}
\affiliation{\institution{Masaryk University}
\city{Brno}\country{Czech Republic}}

\author{Clemens Dubslaff}
\email{c.dubslaff@tue.nl}
\orcid{0000-0001-5718-8276}
\affiliation{\institution{Eindhoven University of Technology}
\city{Eindhoven}\country{The Netherlands}}

\author{Sudeep Kanav}
\email{kanav@fi.muni.cz}
\orcid{0000-0001-6078-4175}
\affiliation{\institution{Masaryk University}
\city{Brno}\country{Czech Republic}}

\author{Jan Kretinsky}
\email{jan.kretinsky@tum.de}
\orcid{0000-0002-8122-2881}
\affiliation{\institution{Masaryk University}
\city{Brno}\country{Czech Republic}}
\affiliation{\institution{Technical University of Munich}
\city{Munich}\country{Germany}}

\author{Christoph Weinhuber}
\email{christoph.weinhuber@cs.ox.ac.uk}
\orcid{0000-0002-1600-4933}
\affiliation{\institution{University of Oxford}
\city{Oxford}\country{United Kingdom}}

	\begin{abstract}
		Safety-critical controllers of complex systems are hard to construct manually. Automated approaches such as controller synthesis or learning provide a tempting alternative but usually lack explainability. To this end, learning decision trees (DTs) has been prevalently used towards an interpretable model of the generated controllers.
		However, DTs do not exploit shared decision-making, a key concept exploited in binary decision diagrams (BDDs) to reduce their size and thus improve explainability.
In this work, we introduce \emph{predicate decision diagrams} (PDDs) that extend BDDs with predicates and thus unite the advantages of DTs and BDDs for controller representation.
We establish a synthesis pipeline for efficient construction of PDDs from DTs representing controllers, exploiting reduction techniques for BDDs also for PDDs. 
	\end{abstract}
	
	\begin{CCSXML}
		<ccs2012>
		<concept>
		<concept_id>10003752.10003809.10010031</concept_id>
		<concept_desc>Theory of computation~Data structures design and analysis</concept_desc>
		<concept_significance>500</concept_significance>
		</concept>
		</ccs2012>
	\end{CCSXML}
	
	\ccsdesc[500]{Theory of computation~Data structures design and analysis}
	
	\keywords{Binary decision diagrams, Decision trees, Learning, Explainability, Decision making and control, Strategy synthesis}
	
	\maketitle

% !TEX root =  ../main.tex
\section{Introduction}\label{sec:intro}

\paragraphh{Controllers} of dynamic systems are hard to construct for a number of reasons: (i) the systems---be it software or cyber-physical systems---are complex due to phenomena such as concurrency, hybrid dynamics, uncertainty, and their steadily increasing sizes; (ii) they are often safety-critical, with a number of complex specifications to adhere to.
Consequently, manually crafting the controllers is error-prone. \emph{Automated controller synthesis} or \emph{learning} provide a tempting alternative, where controllers are constructed algorithmically and their correctness is ensured with human participation limited only to the specification process.
Numerous approaches have been presented to solve this problem, e.g., exploiting game-theoretic methods to construct a winning strategy (a.k.a.\ policy) in a game played by the controller and its environment.

\paragraphh{Explainability} of the automatically constructed controllers (or rather its lack) poses, however, a fundamental obstacle to practical use of automated construction of controllers.
Indeed, an unintelligible controller can hardly be accepted by an engineer, who is supposed to maintain the system, adapt its implementation, provide arguments in the certification process, not to speak of legally binding explainability of learnt controllers \cite{EUX19,USX20,EUX21}.
Unfortunately, most of the synthesis and learning approaches compute controllers in the shape of huge tables of state-action pairs, describing which actions can be played in each state, or of cryptic black boxes.
Such representations are typically so huge that they are utterly incomprehensible.
An explainable representation should be (i) intuitive with decision entities being small enough to be contained in the human working memory, and (ii) in a simple enough formalism so that the decisions can be read off in a way that relates them to the real states of the system.

\paragraphh{Decision trees} (DTs, cf.~\cite{mitchellML}) have a very specific, even unique position among machine-learning models due to their interpretabilty.
As a result, they have become  the predominant formalism for explainable representations of controllers \cite{counterexample-learning,viktor-reactivesynth,viktor-qest19,sos-qest19,neider-dt-invariants,dtcontrol2,DBLP:journals/sttt/JungermannKW23,6408575,gupta2015policy,vos2023optimal} over the past decade.
\Cref{fig:dt_example} shows an example of a policy representation using DTs.
\begin{figure*}[t]
	\centering
	\captionsetup[subfigure]{justification=centering}
	\subfloat[][deterministic]{
		\label{fig:dt_example_deterministic}
		\resizebox{!}{0.3\textwidth}{\tikzstyle{dn} = [draw, ellipse]
\tikzstyle{leaf} = [draw, rounded corners=5pt, minimum height=2em]
\tikzstyle{t} = []
\tikzstyle{f} = [dashed]
\newcommand{\writesub}[1]{\\\small{$\boldsymbol{#1}$}}
\begin{tikzpicture}
  \node[dn] at (2, 0)   (root)  {$battery \le 0.15$};
  \node[dn] at (4, -2) (r)     {$temp \le 21$};
  \node[dn] at (3, -4)  (rl)    {$temp \le 19$};
  
  \node[leaf] at (0, -2)  (l)     {Off};
  \node[leaf] at (5, -4) (rr)    {AC};
  \node[leaf] at (2, -6)  (rll)   {Heating};
  \node[leaf] at (4, -6)  (rlr)  {Off};

  \draw[t] (root) -- node[anchor=east, yshift=4pt]{True}  (l);
  \draw[f] (root) -- node[anchor=west, yshift=4pt]{False} (r);
  \draw[t] (r)    -- node[anchor=east, yshift=4pt]{True}  (rl);
  \draw[f] (r)    -- node[anchor=west, yshift=4pt]{False} (rr);
  \draw[t] (rl)   -- node[anchor=east, yshift=4pt]{True}  (rll);
  \draw[f] (rl)   -- node[anchor=west, yshift=4pt]{False} (rlr);

\end{tikzpicture}}
	}
	\hspace{5em}
	\subfloat[][permissive]{
		\label{fig:dt_example_permissive}
		\resizebox{!}{0.3\textwidth}{\tikzstyle{dn} = [draw, ellipse]
\tikzstyle{leaf} = [draw, rounded corners=5pt, minimum height=2em]
\tikzstyle{t} = []
\tikzstyle{f} = [dashed]
\tikzstyle{nf} = [anchor=west, yshift=4pt]
\tikzstyle{nt} = [anchor=east, yshift=4pt]
\newcommand{\writesub}[1]{\\\small{$\boldsymbol{#1}$}}
\begin{tikzpicture}
  \node[dn] at (2, 0)   (root)  {$battery \le 0.15$};
  \node[dn] at (4, -2) (r)     {$temp \le 20$};
  \node[dn] at (1.6, -4)  (rl)    {$temp \le 19$};
  \node[dn] at (6.4, -4) (rr)    {$temp \le 21$};
  
  \node[leaf] at (0, -2)  (l)     {\{Off\}};
  \node[leaf] at (0.4, -6)  (rll)   {\{Heating\}};
  \node[leaf] at (2.8, -6)  (rlr)   {\{Off, Heating\}};
  \node[leaf] at (5.2, -6) (rrl)   {\{Off, AC\}};
  \node[leaf] at (7.6, -6) (rrr)   {\{AC\}};

  \draw[t] (root) -- node[nt]{True}  (l);
  \draw[f] (root) -- node[nf]{False} (r);
  \draw[t] (r)    -- node[nt]{True}  (rl);
  \draw[f] (r)    -- node[nf]{False} (rr);
  \draw[t] (rl)   -- node[nt]{True}  (rll);
  \draw[f] (rl)   -- node[nf]{False} (rlr);
  \draw[t] (rr)   -- node[nt]{True}  (rrl);
  \draw[f] (rr)   -- node[nf]{False} (rrr);

\end{tikzpicture}}
	}
	\caption{Examples of DT control policy representations: deterministic control policy (left) and  permissive control policy with multiple actions at some states (right).}
	\label{fig:dt_example}
\end{figure*}
	
In contrast, reduced ordered \emph{binary decision diagrams (BDDs)}~\cite{BDD} are very popular to represent large objects compactly, 
exploiting bit-wise representation with automatic reduction mechanisms. 
However, they were barely used as policy representation of controllers due to several advantages of DTs:
\begin{enumerate}[label=\protect\darkcircled{\arabic*},leftmargin=1.5em]
	\item To represent the controller with non-Boolean state variables using a BDD, the variables are usually \emph{bit-blasted} into a binary encoding. 
	The original controller is then rewritten in terms of these Boolean variables, and a BDD is constructed over them.
	In contrast, DTs can use more general predicates when arriving at a decision.
	This allows DTs to be smaller, as a single predicate can encompass multiple bits. 
	Additionally, the use of customized predicates makes DTs easier to understand, as they can better incorporate domain knowledge.
	\item The order of predicates along paths in DTs is not fixed as in reduced ordered BDDs. This gives more flexibility and can lead to substantial space savings in DTs.
	\item DT learning allows ``don't care'' values to be set in any way so that the learnt DT is the smallest. In contrast, BDDs natively represent only fully specified sets, with no heuristic how to resolve the ``don't cares'' resulting typically in representation larger than necessary.
\end{enumerate}
Still, reduced ordered BDDs exhibit reductions not present in DTs, notably merging isomorphic subdiagrams.
This would also be beneficial for explainability of controllers: subdiagram merging
(i) reduces the representation size and hence improves explainability following Occam’s razor, according to which the best explanation is the smallest among all possible ones~\cite{Goo77}, and 
(ii) the psychological complexity towards understanding is also reduced, similar to avoiding code duplication in software~\cite{zuse2019software}.

In this paper, we revisit the challenge of marrying concepts from DTs and reduced ordered BDDs towards controller explanations that exhibit advantages of both representations.
For this, we build upon the idea of linear decision diagrams~\cite{LinearDD} and introduce \emph{predicate decision diagrams (PDDs)} as multi-terminal BDDs~\cite{MTBDD} over bit representations of predicates that inherit all BDD advantages but avoid bit-blasting. PDDs extend DTs and maintain their predicates, profiting from their interpretability, but allow for reduction methods as possible in ordered BDDs.

Specifically, we propose the following approach towards generating reduced ordered PDD from DTs learnt for control policies:
\begin{enumerate}[label=\protect\circled{\arabic*},leftmargin=1.5em]
	\item We collect the predicates mined by the DT representation of a given controller and use them as ``variables'' in reduced ordered BDDs. 
	This enables use of BDD reductions and mature BDD tooling for PDDs, eliminating issue~\dcir{1}.
	\item We apply variable reordering techniques~\cite{Rud93} to reduce the size of the PDD, alleviating issue~\dcir2.
	\item We use Coudert and Madre's method~\cite{CouMad90} restricting the BDD support to ``care sets'', compressing PDDs further, fixing issue~\dcir3.
\end{enumerate}
Notably, due to $\cir{1}$ and $\cir{2}$, contradictory branches can arise in PDDs, e.g.\ by observing contradictory predicates $(\mathit{temp}\leq 19)$ and $\neg(\mathit{temp} \leq 20)$ on one decision path.
We present a detection and elimination mechanism to remove contradictory branches, leading to \emph{consistent PDDs}, further reducing their size and hence improving explainability of PDDs. 
We then use our PDD synthesis pipeline, to address the following two research questions:
\begin{enumerate}[label=\textbf{RQ\textsubscript{\arabic*}},leftmargin=*,nosep]
	\item \emph{How do the sizes of PDDs and bit-blasted BDDs relate?}\label{rq1b}\\
		Following our construction method, PDDs can be represented as (multi-terminal) BDDs, where, however, predicate evaluation does not use bit-blasting as common in classical controller representation via BDDs. The question is whether avoiding bit-blasting can yield more concise representations.
    \item \emph{Can controllers be represented more concisely by using ordered PDDs rather than by using DTs?}\label{rq1a}\\
    	Ordered PDDs impose an order on predicates, while DTs can put important decisions earlier in separate branches, leading to smaller tree sizes. However, PDDs allow for merging subdiagrams.
	The question is which of the complementary reduction mechanisms is more effective.
	\item \emph{What is the impact of consistency, reordering, and care-set reduction onto PDD explanations?}\label{rq2}\\
		We consider several techniques to reduce the size of PDDs and hence improve their explainability. This research question asks for an ablation study on the impact of these techniques.
\end{enumerate}

\paragraphh{Our contributions} answer the research questions above and can be summarized as follows:%\todo{Jan: maybe a bit aggressive?}
\begin{enumerate}[label=$\bullet$,leftmargin=*]
	\item We introduce \emph{PDDs}, a BDD data structure allowing for both explainable and small representation of controllers.
	\item We design an algorithm effectively synthesizing PDD representations, in particular profiting from DT learning and using reasoning modulo DT theory to cater for branch consistency.
	\item We experimentally show that PDDs and several reduction techniques are beneficial for explaining control policies compared to DT and bit-blasted BDD representations. %; 
\end{enumerate}
We experimentally demonstrate that we almost close the gap between bit-blasted BDD and DT representation sizes. In half of the cases PDDs match the size of DTs or even reduce the size, while retaining the same level of explainability as DTs.
	Quantitatively, we reduce the gap between BDD and DT sizes on average by 88\% on the standard benchmarks.
In conclusion, 
	we present PDDs as a viable alternative to DTs, that can offer the best of the both worlds: it is as effective as the DTs as an explainable representation, and can take advantage of the traditional BDD-based tools.
	Proofs of lemmas and theorems that did not fit into the main paper can be found in the appendix.
% !TEX root =  ../main.tex
\section{Related Work}
\paragraphh{Decision trees (DTs)} and similar decision diagram formalisms are prevalently used in explainable AI~\cite{adadi2018peeking} 
to create interpretable representations~\cite{torfah2021synthesizing,verwer2019learning,verhaeghe2020learning} of black-box machine learning models.
They have been suggested for (approximately) representing controllers and counterexamples in probabilistic systems in \cite{counterexample-learning}. The ideas have been extended to other settings, such as reactive synthesis \cite{viktor-reactivesynth} or hybrid systems \cite{sos-qest19}. %Linear predicates have been considered in leaves of the trees in \cite{viktor-qest19}. 
DTs have been used to represent and learn strategies for safety objectives in \cite{neider-fmcad19} and to learn program invariants in \cite{neider-dt-invariants}.
Further, DTs were used for representing the strategies during the model checking process, 
namely in strategy iteration~\cite{DBLP:conf/ijcai/BoutilierDG95} or in simulation-based algorithms \cite{PnH}.
The tool dtControl implements automated synthesis of DTs from controllers~\cite{dtcontrol}.

\paragraphh{Binary decision diagrams (BDDs)}~\cite{BDD} are concisely representing Boolean functions,
widely used in symbolic verification~\cite{McM93} and circuit design~\cite{Min96}.
In the context of controllers, they have been used to represent planning strategies~\cite{DBLP:conf/aaai/CimattiRT98},
hybrid system controllers~\cite{scots,pessoa}, to compress numerical controllers~\cite{DellaPenna2009},
as well as for small representations of controllers through determinization~\cite{DBLP:conf/adhs/ZapreevVM18}.
\emph{Multi-terminal decision diagrams (MTBDDs,~\cite{MTBDD})}, or more generally referred to as
\emph{algebraic DDs (ADDs,~\cite{ADD})}, have been used in reinforcement learning for synthesizing 
policies~\cite{DBLP:conf/uai/HoeySHB99,DBLP:conf/nips/St-AubinHB00}.
MTBDDs extend BDDs with the possibility to have multiple values in the terminal nodes.
\emph{Multi-valued DDs}~\cite{MDD} are extending DDs to allow for multiple decision outcomes in a node
over a finite domain rather than only two values.
While concise, BDDs are usually not as well-suited for explaining controllers, since states and decisions 
are usually using standard bit-vector representations that are difficult to map to actual state values and decisions.

\paragraphh{Extensions of Decision Diagrams.} To avoid bit-blasting and directly enable reasoning on numerical values in decision diagrams,
several extensions of BDDs have been proposed. Notably, \emph{edge-valued BDDs}~\cite{NDD} replace bit-level
aggregation by summation over integer-labeled edges. 
Following ideas from satisfiability solving modulo theories (SMT,~\cite{BarSebSes21}),
the use of BDDs with different theories has been considered in many areas, e.g.\ to directly speed 
up predicate abstraction and SMT~\cite{CavCimFra07}.\todo{boolean abstraction/purification}
Closely related to our work are specific extensions of BDDs where nodes are labeled by predicates from a suitable theory.
Here, notable examples are \emph{equational DDs}~\cite{FriPol00} where nodes might contain equivalences of the 
form $x=y$, \emph{difference DDs}~\cite{DifferenceDD} with predicates of the form $x-y>c$, mainly used
in the verification of timed systems, \emph{constrained DDs}~\cite{ConstrainedDD} extending 
multi-valued DDs with linear constraints to solve constraint satisfaction problems, and 
\emph{linear DDs}~\cite{LinearDD} where nodes are labeled with linear arithmetic predicates.
The latter are the closest to our approach towards PDDs, capturing linear predicates and corresponding to \emph{oblique DTs}. The subclass of \emph{axis-aligned DTs} then reflects in our notion of \emph{axis-aligned PDDs}, having the form
$x\sim c$ where $\sim$ being a binary comparison relation. 
Different to our PDDs, linear DDs enforce an implication-consistent ordering on the 
predicates and use complemented edges to enable constant-time negation operations.
We do not opt for complemented edges in PDDs, since they render DDs much more difficult to comprehend.
The phenomenon of inconsistent paths in BDDs over predicates from a theory has been already
observed in the literature, including sophisticated methods to resolve inconsistencies~\cite{FonGri02,TinHar96,FriPol00,DifferenceDD,LinearDD}.
Since we aim at explainability of controllers, we focus on simplistic predicates and thus can establish a direct yet effective method of ensuring consistency.

\smallskip
\noindent Dubslaff et al.~\cite{DubKloPas24} present \emph{template DDs} (TDDs) to model and improve explainability of self-adaptive systems and DTs learnt from controllers.
Following the concept of procedures, templates in TDDs can be instantiated with decision-making patterns, leading to hierarchical (unordered) multi-valued DDs.
Similar to our approach, DT predicates are interpreted as Boolean variables and sharing of decision making yields size and cognitive load reductions, both improving explainability.

\smallskip
\noindent Recently, there has been research on learning DDs for classification of training data.
Cabodi et al.~\cite{DATE-Cabodi21} propose a SAT-based approach towards a BDD representation of a DT.
Hu et al.~\cite{AAAI-Hu22} use SAT- and a lifted MaxSAT-based model to learn optimal BDDs with bounded depth.
Florio et al.~\cite{OptimalDDs_Classification} propose a mixed-integer linear programming method to learn optimal (not necessarily binary) DDs.

% !TEX root =  ../main.tex

\section{Preliminaries}\label{sec:prelim}

For a set $X$, we denote its power set by $\power{X}$.
A \emph{total order} over a finite set $X$ is a bijection $\tau\colon X \ra \{1, \ldots, |X|\}$ where $|X|$ is the number of elements in $X$.
Let us fix a set of \emph{variables} $\Var$ and a \emph{variable domain} $\Dom$.
Canonical domain candidates are the boolean domain $\Bool=\{0,1\}$, natural numbers $\Nat$, rational 
numbers $\Rat$, and real numbers $\Real$.
An \emph{evaluation} is a function $\epsilon\colon \Var\ra\Dom$; the set of evaluations is denoted by $\Eval$. 
An (axis-aligned) \emph{predicate} is of the form $x\sim c$ where $x\in\Var$ is a variable,
$c\in\Dom$, and $\mathbin{\sim}\in\{=,\geqslant,>\}$ is a comparison relation over $\Dom$. 
The set of predicates is denoted by $\Pred$.
A \emph{simple} predicate is of the form $x>0$ with $x\in\Var$, denoted by $x$ for brevity\footnote{ 
Therefore, we consider variables to be contained in the set of predicates, i.e., $\Var\subseteq\Pred$,
interpreted as simple predicates.}. 
A predicate $x\sim c\in\Pred$ is \emph{satisfied} in an evaluation $\epsilon\colon\Var\ra\Dom$, 
denoted $\epsilon\models (x\sim c)$, iff $\epsilon(x) \sim c$.
We introduce propositional formulas over $\Pred$ by the grammar $\phi ::= \true \mid p \mid \neg\phi \mid \phi \wedge \phi$
where $p$ ranges over $\Pred$. The satisfaction relation $\models$ extends towards 
formulas by $\epsilon\models\neg\phi$ iff $\epsilon~\not\models~\phi$, and
$\epsilon\models\phi_1\land\phi_2$ iff $\epsilon\models\phi_1$ and $\epsilon\models\phi_2$.

\paragraphh{Bit-Blasting.}
Let $S$ be a finite set of states with $|S|$ elements and $n = \lceil \log_2 |S| \rceil$. 
We define bit-blasting as an injective mapping $\beta \colon S \to \Bool^n$, which is constructed in two steps: First, we enumerate states by a total order $\tau\colon S \ra \{1,\ldots,|S|\}$. Second, for each $s\in S$, we assign the binary representation $\beta(s) = (b_{i,1}, b_{i,2}, \dots, b_{i,n})$ where $(b_{i,1}, \ldots, b_{i,n})$ is the $n$-bit encoding of the integer $i=\tau(s)-1$.
Bit-blasting converts each high-level state into a unique bit vector.

\subsection{Policies}
\begin{definition}[Policy]
	For an operational model with a set of states $S$ and actions $\Act$, a \emph{policy} $\sigma\colon S \rightarrow 2^{\Act}$ selects for every state $s\in S$ a set of actions $\sigma(s) \subseteq \Act$.
\end{definition}
	Note that this definition allows \emph{permissive} 
	(or \emph{nondeterministic}) 
policies that can provide multiple possible actions for a state.
A policy $\sigma$ is \emph{deterministic} if $|\sigma(s)|=1$ for all $s\in S$.

	We assume that the set of states $ S$  is structured, i.e., every 
	state is a tuple of values of state variables. 
	In other words, states are not monolithic (e.g., a simple numbering) but there are multiple factors defining it.
	Each of these factors is defined by a variable from the set of variables $\Var$.
	Therefore, we can alternatively view a policy as 
	a (partial) function $\sigma\colon \Eval \ra \power{\Act}$ 
	that maps an evaluation $\epsilon\colon \Var \ra \Dom$ to a set of actions $\sigma(\epsilon)\subseteq\Act$.
	
	\begin{example}\label{ex:policy}
		Consider the policy in \Cref{fig:ex-table}. There are two state variables $\Var=\{x,y\}$. The set of actions is $\Act=\{a,b\}$. 
	\end{example}
	Note that a policy is usually partial, i.e., not all evaluations are in $ S$, e.g., standing for states not reachable in the system model.

\subsection{Decision Trees}
\begin{definition}[DT]
	A \emph{decision tree (DT)} $T$ is defined as follows:
\begin{enumerate}[label=$\bullet$,leftmargin=*]
		\item $T$ is a rooted full binary tree, meaning every node either is an inner node
		 and has exactly two children, or is a 
		 leaf node and has no children.
		\item Every inner node $v$ is associated with a predicate in $\Pred$.
		\item Every leaf $\ell$ is associated with an output label $a_\ell$.
	\end{enumerate}
\end{definition}

Numerous methods exist for learning DTs, e.g., CART \cite{cart}, ID3 \cite{id3}, and C4.5 \cite{c45}. 
These algorithms all evaluate different 
predicates by calculating some \emph{impurity measure} and then greedily pick the most 
promising to split the dataset on that predicate into two halves.
This is recursively repeated until a single action in the dataset is reached, constituting a leaf of the resulting binary tree.

A DT represents a function as follows: an
evaluation $\epsilon$ is evaluated by starting at the root of $T$ and traversing the tree until we reach a leaf node $\ell$. 
Then, the label of the leaf $a_\ell$ is our prediction for the input $\epsilon$. 
When traversing the tree, at each inner node $v$ we decide at which child to continue by evaluating the predicate $\alpha_v$ with $\epsilon$. 
If the predicate evaluates to true, we pick the left child, otherwise, we pick the right one. 

When we represent a (permissive) policy $\sigma\colon  S\ra \power{\Act}$ with a DT, 
our input data is the set of states $ S$ and an output labels describe a subset of actions $\Act$.
Unlike algorithms in machine learning, our objective is to learn a DT that represents the policy on all
data points available from the controller.
Therefore, we do not stop the learning based on a stopping criterion 
but overfit to completely capture the training data.
\emph{Safe early stopping}~\cite{dtcontrol2} can be used to produce smaller DTs
 that do not represent the full permissive policy $\sigma$ 
 but 
 a deterministic $\sigma'$ such that $\sigma'(s)\subseteq\sigma(s)$ for all $s\in S$.
	\Cref{fig:ex-tree} depicts a DT representing the policy of \Cref{ex:policy}. 
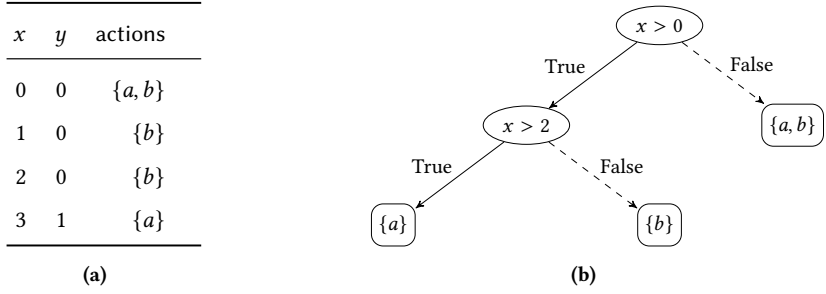
\begin{figure*}[t]
	\captionsetup[subfigure]{justification=centering}
	\centering
	\newsavebox{\tempbox}
	\savebox{\tempbox}{
		{
			\renewcommand{\arraystretch}{1.5}
			\begin{tabular}[b]{r@{\hspace{1.2em}}r@{\hspace{1.2em}}r@{\hspace{1.5em}}c} \toprule
				$x$ & $y$ & \textsf{actions} \\ \midrule
				0 & 0 & $\{a,b\}$ \\
				1 & 0 & $\{b\}$ \\
				2 & 0 & $\{b\}$ \\
				3 & 1 & $\{a\}$ \\ \bottomrule
			\end{tabular}
		}
	}%
	\subfloat[]{\label{fig:ex-table}
		\usebox{\tempbox}
		
	}\hspace{5em}
	\subfloat[]{\label{fig:ex-tree}
		\resizebox{!}{0.18\textwidth}{\tikzstyle{dn} = [draw, ellipse]
\tikzstyle{leaf} = [draw, rounded corners=5pt, minimum height=2em]
\tikzstyle{t} = []
\tikzstyle{f} = [dashed]
\begin{tikzpicture}
  \node[dn] at (4,1.5)   (root)  {$x > 0$};
  \node[dn] at (2,0) (r)     {$x > 2$};
  
  \node[leaf] at (0, -1.5)  (a)     {$\{a\}$};
  \node[leaf] at (4, -1.5) (b)    {$\{b\}$};
  \node[leaf] at (6, 0)  (aa)   {$\{a,b\}$};

  \draw[t] (root) -- node[anchor=east, yshift=4pt]{True}  (r);
  \draw[f] (root) -- node[anchor=west, yshift=4pt]{False} (aa);
  \draw[t] (r)    -- node[anchor=east, yshift=4pt]{True}  (a);
  \draw[f] (r)    -- node[anchor=west, yshift=4pt]{False}  (b);
\end{tikzpicture}}
	}%
	\caption{An example of a policy in the form of a lookup table (left), and the corresponding decision tree (right). 
	}
	\label{fig:exampleDT}
\end{figure*}

\subsection{Binary Decision Diagrams}

\begin{definition}[BDD]\label{def:bdd}
	A (reduced ordered multi-terminal) \emph{binary decision diagram (BDD)}
	is a tuple $\bdd=(N,\Var,\Act,\lambda,\pi,\dec,r)$ where $N$ is a finite set of nodes,
	$\lambda\colon N \ra \Var\cup\power{\Act}$ is a function labeling decision nodes
	$D\subseteq N$ by Boolean variables $\Var$ and action nodes $N{\setminus}D$
	by sets of actions $\Act$, $\pi$ a total order over $\Var$,
	$\dec\colon D\times\Bool \ra N$ 	is a decision function, and $r\in N$ is a root node such that $\bdd$ is
\begin{enumerate}[label=$\bullet$,leftmargin=*]
		\item \emph{ordered}, i.e., for all $d\in D$ and $b\in\Bool$ with $\dec(d,b)\in D$
			we have $\pi\big(\lambda(d)\big) < \pi\big(\lambda(\dec(d,b))\big)$, and
	\item \emph{reduced}, i.e., all decisions $d, d' \in D$ are \emph{essential} 
		($\dec(d,0)\neq\dec(d,1)$) and $\bdd$ does not contain isomorphic subdiagrams. 
	\end{enumerate}
\end{definition}
A $\pi$-BDD is a BDD that is ordered w.r.t. the variable order $\pi$. 
A \emph{path} in $\bdd$ is an alternating sequence $d_0,b_0,d_1,\ldots,d_k \in (D\times \Bool)^\ast\times N$ 
where $d_{i+1} = \dec(d_i,b_i)$ for all $i<k$.
Given a node $n\in N$, we denote by $\Reach(n)\subseteq N$ the nodes reachable by a path from $n$.
In the following, we assume all nodes $N$ in $\bdd$ to be reachable from root $r$.
By $\bdd_n$ we denote the sub-BDD of $\bdd$ that comprises nodes $\Reach(n)$ and root $n$.
Note that reducedness ensures that for each subset of actions $A\subseteq\Act$ there is at most one
node $a\in N{\setminus}D$ that is labeled by $A$, i.e., $\lambda(a)=A$.

\paragraphh{Semantics of BDDs.} A BDD over $\Var$ and $\Act$ represents a function 
$f\colon \mathit{Eval}(\Var,\Bool)\ra \power{\Act}$ as follows:
every evaluation $\epsilon$ is evaluated by starting at the root $r$ and traversing the graph 
until we reach a terminal action node $v_t$. Then $f(\epsilon)=\lambda(v_t)$. 
When traversing, at each decision node $d\in V$ we decide at which child to continue by checking the value of $\epsilon(\lambda(d))$: 
if $\epsilon(\lambda(d))=1$, we pick $\dec(d,1)$ as the next node.
Otherwise, we pick $\dec(d,0)$ as the next node. 
Formally, given a node $n$ in $\bdd$, we define the function $f=\Sem{\bdd_n}$ 
recursively through evaluations $\epsilon\colon\Var\ra\Bool$:
\begin{center}$
	\Sem{\bdd_{n}}(\epsilon) = 
	\begin{cases}
		\lambda(n) & \text{if } n \in N{\setminus}D\\
		\Sem{\bdd_{\dec(n,\epsilon(\lambda(n)))}}(\epsilon) & \text{if } n\in D\\
	\end{cases}
$\end{center}
To represent a policy $\sigma\colon S\ra\power{\Act}$ using a BDD, 
we treat it as a function $f_{\sigma}$ over evaluations as above:
we represent each state $s\in S$ as a bitvector $\overline{s}$ and
define $f_{\sigma}$ by  $f_{\sigma}(\bar{s})=\sigma(s)$.

While BDDs are a useful data structure to represent policies, 
they have a few disadvantages compared to the DTs. 
Firstly, nodes in BDDs correspond to simple predicates in DTs,
making them less human-interpretable compared to DTs. 
In contrast, DTs supports a richer variety of predicates and variable domains (even exceeding the
class of axis-aligned predicates we focus on in this paper). 
Secondly, BDDs have a canonical form that ensures a unique representation as they are ordered and reduced. 
This facilitates efficient comparison and manipulation of BDDs.
In contrast, DTs allow flexible ordering, which might allow for smaller lengths of decision sequences,
possibly easier to understand.
Thirdly, a DT can be generated from a partial policy, ignoring some inputs
towards smaller representations.
In contrast, existing BDD constructions are not directly optimized for small representations of partial functions.
% !TEX root =  ../main.tex

\section{Predicate Decision Diagrams}\label{sec:MTPDDs}

In this section, we propose \emph{predicate decision diagrams (PDDs)} as an 
instance of linear DDs~\cite{LinearDD} that are extended to represent control policies.
PDDs combine concepts from DTs and BDDs to benefit from both formalism's advantages.

\begin{definition}
	A \emph{predicate decision diagram (PDD)} over $\Var$ and $\Dom$
	is a tuple $\pdd=(N,\Var,\Act,\lambda,\dec,r)$ where $N$ is a finite set of nodes,
	$\lambda\colon N \ra \Pred\cup\power{\Act}$ is a function that labels decision nodes
	$D\subseteq N$ by predicates over $\Var$ and $\Dom$ and action nodes $N{\setminus}D$
	by sets of actions $\Act$, $\dec\colon D\times\Bool \ra N$ 
	is a decision function, and $r\in N$ is a root node.
\end{definition}
Notice the similarities with the BDD definition (see \Cref{def:bdd}).
While in BDDs decision nodes are labeled by variables, they are labeled by predicates in PDDs.
Throughout this section, let us fix a PDD $\pdd$ as above. We inherit BDD notations as expected and call $\pdd$
\begin{enumerate}[label=$\bullet$,leftmargin=*]
	\item \emph{deterministic} if all reachable action nodes are labeled by singletons, i.e., 
		$a\in N{\setminus}D$ implies $|\lambda(a)|=1$.	
	\item \emph{ordered} for a total order $\pi$ over $\lambda(D)$ if
		for all $d\in D$ and $b\in\Bool$ with $\dec(d,b)\in D$ we have $\pi\big(\lambda(d)\big) < \pi\big(\lambda(\dec(d,b))\big)$.
	\item \emph{reduced} if all decisions $d, d' \in D$ are \emph{essential} 
	(i.e.\ $\dec(d,0)\neq\dec(d,1)$) and $\pdd\ $ does not contain isomorphic 
	subdiagrams (i.e., $\pdd_d \cong \pdd_{d'}$ implies $d = d'$).
\end{enumerate}
To distinguish between components of a PDD $\pdd\ $ or BDD $\bdd$, we occasionally add suffixes, e.g.,
$\lambda_{\bdd}$ to indicate the labeling function of $\bdd$.
Note that our definition of PDDs covers the standard notions of DTs and BDDs:
A DT is a PDD where the underlying graph has a tree structure: 
for all $n,n'\in N$ with $n'\not\in\Reach(n)$ and $n\not\in\Reach(n')$ we have $\Reach(n)\cap\Reach(n')=\varnothing$.
A BDD is a reduced ordered PDD where all predicates are simple.

\paragraphh{Semantics of PDDs.}
Given a node $n$ in $\pdd$, we define a policy $\Sem{\pdd_n}$ 
recursively through evaluations $\epsilon\colon\Var\ra\Dom$:
\begin{center}$
	\Sem{\pdd_{n}}(\epsilon) = 
	\begin{cases}
		\lambda(n) & \text{if } n \in N{\setminus}D\\
		\Sem{\pdd_{\dec(n,0)}}(\epsilon) & \text{if } n\in D \text{ and } \epsilon ~\not\models~\lambda(n)\\
		\Sem{\pdd_{\dec(n,1)}}(\epsilon) & \text{if } n\in D \text{ and } \epsilon\models\lambda(n)\\
	\end{cases}
$\end{center}
Two PDDs $\pdd$ and $\pdd'$ are \emph{semantically equivalent} %, denoted $\pdd\equiv\pdd'$, 
iff $\Sem{\pdd}({\epsilon}) = \Sem{\pdd'}(\epsilon)$ for all evaluations $\epsilon\colon\Var\ra\Dom$.
A path $d_0,b_0,d_1,\ldots,d_k \in (D\times \Bool)^\ast\times N$ is \emph{consistent}
with an evaluation $\epsilon\in\Eval$ if for all $i\leqslant k$ we have $\epsilon\models\lambda(d_i)$ iff $b_i=1$.
We call a PDD \emph{consistent} if for all paths from
the root to an action node there is a consistent evaluation.
Note that different to BDDs, the essentiality of reduced PDDs does not propagate
on a semantic level, i.e., a decision node can have different successors but still may be irrelevant for the overall represented policy. 
For instance, if we would replace $\{a,b\}$ in \Cref{fig:exampleDT} by $\{b\}$, the decision $x>0$
becomes irrelevant for the policy even though it has two different successors ($x>2$ and $\{b\}$). 
The reason is in the possible semantic dependence of predicates, leading to the policy yielding 
$\{a\}$ if $x>2$ and $\{b\}$ otherwise, independent of the decision $x>0$. 
Consistency however ensures that after reaching a predicate node there are always evaluations 
for both cases, satisfaction and unsatisfaction of the predicate.

\subsection{From PDDs to BDDs and Back}
Taking on a Boolean perspective on predicates yields equivalence classes for evaluations
that satisfy the same predicates. Then, predicates can be seen as Boolean variables
evaluated over equivalence class containment. Therefore, we now investigate
connections between PDDs and BDDs~\cite{ADD,MTBDD} and %aim at transformations
that enable mature theory and tool support of BDDs also for PDDs. 

\subsubsection{BDD Encoding}
Consider a PDD $\pdd=(N,\Var,\Act,\lambda,\dec,r)$ over $\Var$ and $\Dom$.
Let $\PVar$ denote a set of Boolean \emph{predicate variables} 
for which there is a bijection $\gamma\colon \PVar \ra \lambda(D)$ we call \emph{predicate bijection}.
Further, define the \emph{$\gamma$-lifting} of an evaluation $\epsilon\colon\Var\ra\Dom$ over
$\Var$ as the Boolean evaluation $\gamma^\epsilon\colon \PVar \ra \Bool$ where for all $x\in\PVar$:\vspace{-.5em}
\[
\gamma^\epsilon(x) \en{=}
\begin{cases}
	1 & \text{if }\epsilon\models\gamma(x)\\
	0 & \text{otherwise}
\end{cases}
\]
We then say that a BDD $\bdd$ over $\PVar$ is \emph{equivalent modulo $\gamma$} to a PDD $\pdd$
iff $\Sem{\bdd}(\gamma^\epsilon) = \Sem{\pdd\ }(\epsilon)$ for all $\epsilon\colon\Var\ra\Dom$. 
Any BDD $\bdd$ over $\PVar$ and $\gamma$ trivially
exhibits an equivalent PDD $\gamma(\bdd)$ that arises from $\bdd$ by changing the labeling
of each decision node $d \in D_{\bdd}$ to $\gamma(\lambda_{\bdd}(d))$.
A path in $\bdd$ is \emph{predicate consistent} w.r.t. $\gamma$ if its corresponding path 
in the PDD $\gamma(\bdd)$ is consistent.
Likewise, $\bdd$ is \emph{predicate consistent} w.r.t. $\gamma$ if $\gamma(\bdd)$ is consistent.
Note that we consider BDDs to be reduced and ordered, hence $\gamma(\bdd)$ is a reduced and ordered PDD.
The other way around, any reduced ordered PDD $\pdd\ $ directly provides a BDD by defining Boolean variables $\PVar$ 
for each predicate in $\pdd\ $ towards a predicate bijection $\gamma$ where each labeling of decision nodes
$d\in D$ are replaced by a predicate variable $\gamma^{-1}(\lambda(d))$.

The main advantages of BDDs, namely canonicity and concise representation,
cannot directly be expected for PDDs.
Our goal is hence to enable these desirable properties also for PDDs,
including possibilities to exploit mature theory and broad tool support available for BDDs. 
While ordering, reducedness, and (predicate) consistency are obviously maintained by the transformations
above, our main application concerns PDDs in form of DTs, which are a priori neither ordered nor reduced.

\begin{algorithm}[t]
	\small
	\begin{algorithmic}[1]
		\Require PDD $\pdd=(N,\Var,\Act,\lambda,\dec,r)$, bijection $\gamma\colon\PVar\ra\lambda(D)$, 
		$\PVar$ order $\pi$
		\Ensure A $\pi$-BDD $\bdd$
		\If{$\lambda(r)\subseteq\Act$} \Comment{action set labeled root}
		\State \Return % $\textsc{GetTerminal}(\lambda(r))$
		$\bdd=(\{r\},\PVar,\Act,\{(r,\lambda(r))\},\pi,\varnothing,r)$\label{l:base}
		\Else \Comment{predicate labeled root}
		\State $\bdd_0 \gets \textsc{Pdd2Bdd}(\pdd_{\dec(r,0)},\gamma,\pi)$ \Comment{compile 0-successor PDD}
		\State $\bdd_1 \gets \textsc{Pdd2Bdd}(\pdd_{\dec(r,1)},\gamma,\pi)$ \Comment{compile 1-successor PDD}
		\State \Return $\bdd=\textsc{ITE}(\gamma^{-1}(\lambda(r)), \bdd_1, \bdd_0)$ \label{l:ite}
		\EndIf 
	\end{algorithmic}
	\caption{\label{alg:compile} \textsc{Pdd2Bdd(\pdd,$\gamma$,$\pi$)}: compile PDD $\pdd$ to the $\pi$-BDD $\bdd$ w.r.t. $\gamma$}
\end{algorithm}

\Cref{alg:compile} compiles a PDD $\pdd\ $ into a BDD $\bdd$
that is equivalent to $\pdd$ modulo $\gamma$.
The algorithm recursively traverses the PDD until reaching an action node (see \Cref{l:base})
which is directly returned as BDD with an action node as root.
Here, we rely on the \textsc{ITE} operator (if-then-else)
to implement the interpretation of the root predicate $\lambda(r)$ by the BDD 
predicate variable $x=\gamma^{-1}(\lambda(r))$.
The \textsc{ITE} operator is standard for BDDs and implements the Shannon expansion of
$\Sem{\bdd}$, i.e., return $\Sem{\bdd_1}$ if $x$ holds and $\Sem{\bdd_0}$ otherwise.

\begin{figure*}
	\centering
	\captionsetup[subfigure]{justification=centering}
	\subfloat[]{\label{fig:ex-inconsistant-pdd}
		\resizebox{!}{0.2\textwidth}{\tikzstyle{dn} = [draw, ellipse]
\tikzstyle{leaf} = [draw, minimum height=2em]
\tikzstyle{t} = []
\tikzstyle{f} = [dashed]
\begin{tikzpicture}
	\node[dn,red] at (4,2)   (p1)  {$x > 2$};
	\node[dn,red] at (2,0) (p01)     {$x > 0$};
	\node[dn] at (6,0) (p02)     {$x > 0$};
	
	\node[leaf] at (0, -2)  (a1)     {$\{a\}$};
	\node[leaf] at (4, -2) (a2)    {$\{a,b\}$};
	\node[leaf] at (8, -2) (a3)    {$\{b\}$};
%	\node[leaf] at (9, -2) (a4)    {$\{a\}$};
%
	\draw[t,red] (p1) -- node[anchor=east, yshift=4pt]{}  (p01);
	\draw[f] (p1) -- node[anchor=west, yshift=4pt]{}  (p02);
	\draw[t] (p01) -- node[anchor=east, yshift=4pt]{}  (a1);
	\draw[f,red] (p01) -- node[anchor=west, yshift=4pt]{} (a2);
	\draw[f] (p02) -- node[anchor=east, yshift=4pt]{}  (a2);
	\draw[t] (p02) -- node[anchor=west, yshift=4pt]{} (a3);
	%-- node[anchor=east, yshift=4pt, bend left=45]{}
%	\draw[t, blue] (p1) edge[bend right=90] (a1);
%	\fill[rotate=45,-,fill=green, fill opacity=0.2, text opacity=1] (4.2,-2.4) arc (0:360:3.5cm and 2cm);
%	node[pos=0.25,sloped,rotate=45,above] {inconsistency};

\end{tikzpicture}}
	}\qquad
	\subfloat[]{\label{fig:ex-consistant-pdd}
		\resizebox{!}{0.2\textwidth}{\tikzstyle{dn} = [draw, ellipse]
\tikzstyle{leaf} = [draw, minimum height=2em]
\tikzstyle{t} = []
\tikzstyle{f} = [dashed]
\begin{tikzpicture}
	\node[dn] at (4,2)   (p1)  {$x > 2$};
	\node[dn, opacity=0.3] at (2,0) (p01)     {$x > 0$};
	\node[dn] at (6,0) (p02)     {$x > 0$};
	
	\node[leaf] at (0, -2)  (a1)     {$\{a\}$};
	\node[leaf] at (4, -2) (a2)    {$\{a,b\}$};
	\node[leaf] at (8, -2) (a3)    {$\{b\}$};
	%	\node[leaf] at (9, -2) (a4)    {$\{a\}$};
	%
	\draw[t, opacity=0.3] (p1) -- node[anchor=east, yshift=4pt]{}  (p01);
	\draw[f] (p1) -- node[anchor=west, yshift=4pt]{}  (p02);
	\draw[t, opacity=0.3] (p01) -- node[anchor=east, yshift=4pt]{}  (a1);
	\draw[f, opacity=0.3] (p01) -- node[anchor=west, yshift=4pt]{} (a2);
	\draw[f] (p02) -- node[anchor=east, yshift=4pt]{}  (a2);
	\draw[t] (p02) -- node[anchor=west, yshift=4pt]{} (a3);
	%-- node[anchor=east, yshift=4pt, bend left=45]{}
		\draw[t, blue] (p1) edge[bend right=45] (a1);
	%	\fill[rotate=45,-,fill=green, fill opacity=0.2, text opacity=1] (4.2,-2.4) arc (0:360:3.5cm and 2cm);
	%	node[pos=0.25,sloped,rotate=45,above] {inconsistency};
	
\end{tikzpicture}}
	}%
	\caption{\label{fig:compile}Compilation of the DT in \Cref{fig:ex-tree} to a (predicate inconsistent) BDD (left) using \Cref{alg:compile}. A predicate consistent BDD (right) can be created by replacing the inconsistent part by the blue edge using \Cref{alg:pconsistency}.}
	\label{fig:examplePDD}
\end{figure*}
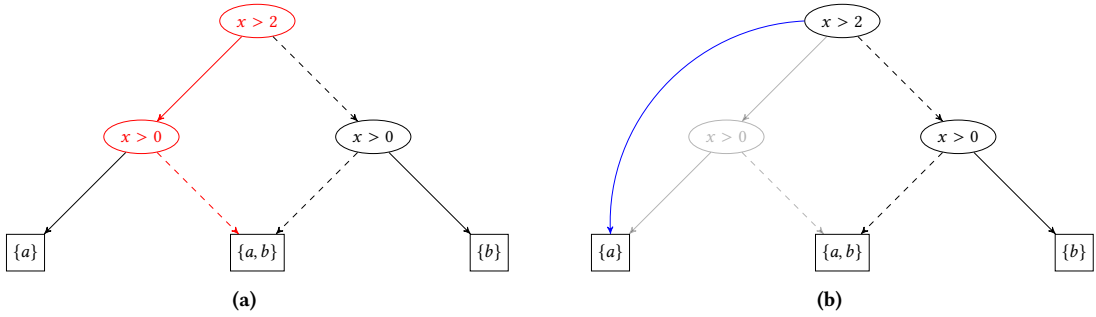

\begin{example}
\Cref{fig:ex-inconsistant-pdd} shows an application of \Cref{alg:compile} on the PDD in \Cref{fig:ex-tree} 
according to the variable
order $\pi=(p_1,p_0)$ over predicate variables $X=\{p_0,p_1\}$ with $\gamma(p_0)=(x>0)$
and $\gamma(p_1)=(x>2)$. Note that the resulting $\pi$-BDD is not predicate consistent,
since $x>2$ implies also $x>0$ and hence, $(x>2)\land\neg(x>0)$ is unsatisfiable,
leading the $0$-successor of $x>0$ to induce inconsistent paths (in red).
\end{example}

\begin{restatable}{lemma}{pddbdd}\label{lem:pdd2bdd}
	Given a PDD $\pdd$, a predicate bijection $\gamma$ for $\pdd$ over variables $\PVar$, and a total
	order $\pi$ over $\PVar$, \textsc{Pdd2Bdd}$(\pdd,\gamma,\pi)$ (see \Cref{alg:compile}) returns 
	a $\pi$-BDD that is equivalent modulo $\gamma$ to $\pdd$.
\end{restatable}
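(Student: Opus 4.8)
The plan is to prove both assertions of the lemma — that \textsc{Pdd2Bdd}$(\pdd,\gamma,\pi)$ returns a genuine $\pi$-BDD, and that this BDD is equivalent modulo $\gamma$ to $\pdd$ — simultaneously, by well-founded induction following the recursive structure of \Cref{alg:compile}. Since $\pdd$ is finite and acyclic (acyclicity being exactly what makes the recursive semantics $\Sem{\pdd}$ well-defined), the number of nodes in the sub-PDD $\pdd_n$ rooted at a node $n$, namely $|\Reach(n)\cup\{n\}|$, is a suitable measure: in the recursive case the calls are made on $\pdd_{\dec(r,0)}$ and $\pdd_{\dec(r,1)}$, and acyclicity guarantees $r\notin\Reach(\dec(r,b))\cup\{\dec(r,b)\}$, so both sub-PDDs are strictly smaller. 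Throughout I keep $\gamma$ and $\pi$ fixed and reuse them in the recursive calls, as the algorithm does.

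For the base case (\Cref{l:base}), the root is an action node with $\lambda(r)\subseteq\Act$, and the returned object is a single action node. It is trivially ordered and reduced as it contains no decision nodes, hence a valid $\pi$-BDD; and both semantics evaluate to the constant $\lambda(r)$, so $\Sem{\bdd}(\gamma^\epsilon)=\lambda(r)=\Sem{\pdd}(\epsilon)$ for every $\epsilon\colon\Var\ra\Dom$.

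For the inductive step (\Cref{l:ite}), $r$ is a decision node, so $\lambda(r)\in\lambda(D)$ and $x\eqdef\gamma^{-1}(\lambda(r))$ is a well-defined predicate variable because $\gamma$ is a bijection. The induction hypothesis yields that $\bdd_0$ and $\bdd_1$ are $\pi$-BDDs equivalent modulo $\gamma$ to $\pdd_{\dec(r,0)}$ and $\pdd_{\dec(r,1)}$, respectively. I would then invoke the two standard guarantees of the \textsc{ITE} operator on $\pi$-BDDs: (i) applied to $\pi$-BDDs over $\PVar$ it returns a reduced, ordered $\pi$-BDD over $\PVar$, which discharges the structural claim; and (ii) it realizes the Shannon expansion, i.e.\ $\Sem{\textsc{ITE}(x,\bdd_1,\bdd_0)}(\mu)=\Sem{\bdd_1}(\mu)$ when $\mu(x)=1$ and $\Sem{\bdd_0}(\mu)$ otherwise, for every Boolean $\mu\colon\PVar\ra\Bool$. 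The heart of the argument is aligning this Boolean case split with the case split in $\Sem{\pdd}$: by definition of the $\gamma$-lifting, $\gamma^\epsilon(x)=1$ holds exactly when $\epsilon\models\gamma(x)=\lambda(r)$. Thus, fixing $\epsilon$, if $\epsilon\models\lambda(r)$ the expansion selects $\Sem{\bdd_1}(\gamma^\epsilon)$ while the PDD semantics selects $\Sem{\pdd_{\dec(r,1)}}(\epsilon)$, and these coincide by the induction hypothesis; the case $\epsilon\not\models\lambda(r)$ selects the $0$-branches and closes symmetrically. This gives $\Sem{\bdd}(\gamma^\epsilon)=\Sem{\pdd}(\epsilon)$ for all $\epsilon$, i.e.\ equivalence modulo $\gamma$.

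I expect the only genuine subtlety — the main obstacle — to be the bookkeeping around the fixed global $\gamma$ and $\pi$ under recursion: the sub-PDDs mention only a subset of the predicates in $\lambda(D)$, yet the recursive calls reuse the full $\gamma$. I therefore need to observe that ``equivalent modulo $\gamma$'' is a purely semantic condition that is insensitive to $\gamma$ assigning predicates to variables absent from a sub-BDD, and that the single lifting $\gamma^\epsilon$ restricts correctly to the variables actually occurring in $\bdd_0$ and $\bdd_1$, so the induction hypothesis applies verbatim. The remaining ingredients — canonicity/order-preservation and Shannon-correctness of \textsc{ITE} — I would cite as the standard BDD result rather than reprove.
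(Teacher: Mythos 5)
Your proof is correct and follows essentially the same route as the paper's: structural induction on the recursion of \textsc{Pdd2Bdd}, with the base case handled trivially and the inductive step resting on the Shannon-expansion semantics of \textsc{ITE} together with the observation that $\gamma^\epsilon(\gamma^{-1}(\lambda(r)))=1$ iff $\epsilon\models\lambda(r)$ since $\gamma$ is a bijection. You are somewhat more explicit than the paper on two points the paper leaves implicit --- the well-founded termination measure and the structural claim that \textsc{ITE} yields a reduced ordered $\pi$-BDD --- but these are refinements of, not departures from, the same argument.
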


\begin{algorithm}[t]
\small
\begin{algorithmic}[1]
\Require $\pi$-BDD $\bdd=(N,\PVar,\Act,\lambda,\pi,\dec,r)$, $\gamma\colon\PVar\ra\Pred$, 
	context predicate formula $\phi$ over $\Pred$
\Ensure A $\pi$-BDD $\bdd'$ %predicate consistent w.r.t. $\gamma$
\If{$\lambda(r)\subseteq\Act$} \Comment{actions (base)}
	\State \Return $\bdd'=(\{r\},\PVar,\Act,\{(r,\lambda(r))\},\pi,\varnothing,r)$
\Else \Comment{predicates (recursion step)}
	\If{$\phi\land\gamma(\lambda(r))$ is satisfiable}\label{l:satpos}
		\State $\bdd_1' \gets \textsc{PConsistency}(\bdd_{\dec(r,1)},\gamma,\phi\land\gamma(\lambda(r)))$
		\If{$\phi\land\neg\gamma(\lambda(r))$ is satisfiable}\label{l:satneg}
			\State $\bdd_0' \gets \textsc{PConsistency}(\bdd_{\dec(r,0)},\gamma,\phi\land\neg\gamma(\lambda(r)))$
			\State \Return $\bdd' = \textsc{ITE}(\lambda(r), \bdd_1', \bdd_0')$ \label{l:bothsat}
		\Else
			\State \Return $\bdd'=\bdd_1'$ \label{l:s1sat}
		\EndIf
	\Else
		\State $\bdd_0' \gets \textsc{PConsistency}(\bdd_{\dec(r,0)},\gamma,\phi\land\neg\gamma(\lambda(r)))$
		\State \Return $\bdd' = \bdd_0'$
			\label{l:s0sat}
	\EndIf
\EndIf 
\end{algorithmic}
\caption{\label{alg:pconsistency} $\textsc{PConsistency}(\bdd,\gamma,\phi)$: 
Turn a $\pi$-BDD $\bdd$ into a $\pi$-BDD $\bdd'$ predicate consistent w.r.t. predicate bijection $\gamma$}
\end{algorithm}

\subsubsection{Predicate Consistent BDDs}
The BDD returned by \Cref{alg:compile} does not have to be predicate consistent.
Since PDDs are a priori not ordered, inconsistent orders of decisions might be enforced
by the order $\pi$ along \textsc{Pdd2Bdd} performs its compilation (see \Cref{fig:ex-inconsistant-pdd}).
We now present a simple yet effective method to prune inconsistent branches in
BDDs while maintaining their semantics in \Cref{alg:pconsistency}. Here, we rely on satisfiability solving 
over predicate formulas, an instance of a classical problem for state-of-the-art SMT solvers.
The algorithm traverses the BDD and keeps track of the visited predicates that were
considered to be satisfied or not. The latter is achieved by a \emph{context predicate formula} $\phi$
that is a conjunction over all past satisfaction decisions on predicates.
If one of the predicates assigned to the current decision node or its negation is unsatisfiable within
the context, then the opposite decision is taken and the corresponding decision node is returned.
Note that it cannot be that a predicate and its negation are both unsatisfiable.

\begin{restatable}{lemma}{pconsistency}\label{lem:pconsistency}
	For a given $\pi$-BDD $\bdd$ over $\PVar$ and a predicate bijection $\gamma\colon\PVar\ra\Pred$, 
	$\textsc{PConsistency}(\bdd,\gamma,\true)$ (see \Cref{alg:pconsistency})
	returns a predicate consistent $\pi$-BDD $\bdd'$ 
	that has the same semantics modulo $\gamma$ as $\bdd$, i.e., 
	$\Sem{\bdd}(\gamma^\epsilon)=\Sem{\bdd'}(\gamma^\epsilon)$ for all evaluations $\epsilon\colon\Var\ra\Dom$.
\end{restatable}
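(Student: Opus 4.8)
The plan is to prove a stronger, context-parametrized statement by induction on the structure of the BDD and then instantiate it with $\phi=\true$. Concretely, I would show that for every sub-BDD $\bdd$ encountered in the recursion and every \emph{satisfiable} context formula $\phi$ passed to $\textsc{PConsistency}(\bdd,\gamma,\phi)$, the returned $\pi$-BDD $\bdd'$ satisfies (i) $\Sem{\bdd'}(\gamma^\epsilon)=\Sem{\bdd}(\gamma^\epsilon)$ for all $\epsilon$ with $\epsilon\models\phi$, and (ii) every root-to-action path of $\bdd'$, inducing a conjunction $\psi$ of (possibly negated) predicates, is such that $\phi\wedge\psi$ is satisfiable. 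Instantiating with $\phi=\true$ then yields exactly the claimed semantic equivalence (every $\epsilon$ satisfies $\true$) together with predicate consistency (each path conjunction is itself satisfiable, which is the definition of a consistent $\gamma(\bdd')$). The crucial preliminary observation, already noted in the text, is that the invariant ``$\phi$ is satisfiable'' is preserved by every recursive call: if $\phi$ is satisfiable then $\phi\wedge\gamma(\lambda(r))$ and $\phi\wedge\neg\gamma(\lambda(r))$ cannot both be unsatisfiable, so whichever branch the algorithm descends into carries a satisfiable context.

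First I would dispatch the base case $\lambda(r)\subseteq\Act$: the algorithm returns a single action node, whose only path is trivial (empty $\psi$, so $\phi\wedge\psi=\phi$ is satisfiable) and whose semantics is the constant $\lambda(r)=\Sem{\bdd}(\gamma^\epsilon)$, giving both (i) and (ii). For the recursive step on a decision node with predicate $p=\gamma(\lambda(r))$, I would split along the three algorithmic cases. When both $\phi\wedge p$ and $\phi\wedge\neg p$ are satisfiable (see \Cref{l:bothsat}), the induction hypothesis applies to both children under the refined contexts $\phi\wedge p$ and $\phi\wedge\neg p$; semantics follows from the Shannon expansion implemented by \textsc{ITE}, and consistency follows because each path of $\textsc{ITE}(\lambda(r),\bdd_1',\bdd_0')$ prefixes $p$ (resp.\ $\neg p$) to a path of $\bdd_1'$ (resp.\ $\bdd_0'$), for which the hypothesis already gives satisfiability of $\phi\wedge p\wedge\psi_1$ (resp.\ $\phi\wedge\neg p\wedge\psi_0$). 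In the two pruning cases (see \Cref{l:s1sat} and \Cref{l:s0sat}), exactly one of $\phi\wedge p$, $\phi\wedge\neg p$ is satisfiable; say only $\phi\wedge p$ is. Then every $\epsilon\models\phi$ forces $\epsilon\models p$, i.e.\ $\gamma^\epsilon(\gamma^{-1}(p))=1$, so the Shannon expansion of $\bdd$ collapses onto its $1$-successor and, by the hypothesis on that child, agrees with the returned $\bdd'=\bdd_1'$; property (ii) is inherited verbatim from the child, since dropping the conjunct $p$ from each $\phi\wedge p\wedge\psi_1$ only weakens a satisfiable formula. The symmetric argument handles the case where only $\phi\wedge\neg p$ is satisfiable.

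The step that I expect to require the most care is reconciling this path-level argument with the fact that \textsc{ITE} returns a \emph{reduced} ordered BDD, so the combined diagram need not literally be the tree $\lambda(r)\to(\bdd_1',\bdd_0')$. I would therefore argue separately that BDD reduction preserves predicate consistency in a fixed context: deleting a non-essential node removes one conjunct from every path passing through it, which only weakens the path conjunction and hence keeps $\phi\wedge\psi$ satisfiable; and merging isomorphic subdiagrams gives each surviving root-to-action path the same predicate conjunction as some genuine path of the unreduced diagram, since isomorphic subdiagrams carry identical predicate labelings. Because satisfiability is monotone under dropping conjuncts, both reduction rules preserve property (ii), while property (i) holds because reduction is by definition semantics-preserving and \textsc{ITE} realizes the Shannon expansion exactly. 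Termination of the recursion, and hence well-foundedness of the induction, is immediate from orderedness: $\bdd$ is acyclic, so the set of reachable nodes strictly decreases at every recursive call, and the induction can be carried out on $|\Reach(r)|$. Assembling these pieces establishes both conclusions of the lemma.
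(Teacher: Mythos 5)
Your proposal is correct, and its skeleton---structural induction over the algorithm's recursion, carrying the context formula $\phi$ and using the invariant that $\phi$ stays satisfiable (since $\phi\land\gamma(\lambda(r))$ and $\phi\land\neg\gamma(\lambda(r))$ cannot both be unsatisfiable)---matches the paper's proof for the semantic-preservation half almost exactly. Where you genuinely diverge is in how consistency is established. The paper does \emph{not} fold consistency into the induction: it proves semantics inductively and then gives a separate, global argument, tracing each root-to-action path of $\bdd'$ back to the \textsc{ITE} calls (\Cref{l:bothsat}) that created its decision nodes, and observing that each such call had a context $\phi_i$ consisting precisely of the path's earlier literals, both extensions of which were checked satisfiable; the witness for the last node then witnesses the whole path. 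Your route instead strengthens the induction hypothesis to a context-relative path-satisfiability property (your item (ii)) and discharges it case by case. What your version buys is an explicit treatment of the step the paper glosses over: that \textsc{ITE} returns a \emph{reduced} diagram, so paths in $\bdd'$ need not correspond literally to chains of recursive calls. Your observation that node elimination only drops conjuncts (weakening a satisfiable formula) and that merging isomorphic subdiagrams preserves path conjunctions verbatim closes exactly the gap left implicit in the paper's ``nodes are only generated at \textsc{ITE}'' argument. The cost is a heavier induction statement; the paper's post-hoc argument is shorter but leans on the unstated correspondence between surviving paths and satisfiable creation contexts. Both are valid proofs of the lemma.
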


\begin{example}
\Cref{fig:ex-consistant-pdd} shows how to create a predicate-consistent $\pi$-BDD from the $\pi$-BDD 
in \Cref{fig:ex-consistant-pdd}. The left $p_0$-node (labeled by the predicate ``$x>0$'') can be safely removed and the $1$-successor of the
root is redirected to the leftmost $a$-node (i.e., by removing the greyed part and introducing
the blue branch).
\end{example}

Combining \Cref{lem:pdd2bdd} and \Cref{lem:pconsistency}, we obtain a method for predicate consistent 
BDD representations of a given PDD and can benefit from standard BDD techniques to 
represent control policies:

\begin{theorem}\label{thm:canonicityPDD}
	Given a PDD $\pdd$ and a total order $\pi$ on its predicates there is a 
	consistent reduced $\pi$-PDD represented by a predicate consistent BDD 
	semantically equivalent to $\pdd$.
\end{theorem}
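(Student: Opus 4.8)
The plan is to derive the theorem as a straightforward composition of \Cref{lem:pdd2bdd} and \Cref{lem:pconsistency}, glued together by the trivial back-transformation $\gamma(\cdot)$ from BDDs to PDDs described just before \Cref{alg:compile}. To set this up, I first fix a set of predicate variables $\PVar$ together with a predicate bijection $\gamma\colon\PVar\ra\lambda(D)$ for $\pdd$, and transport the given total order $\pi$ on the predicates $\lambda(D)$ to a total order on $\PVar$ via $\gamma$. This reduces the statement to chaining the two algorithmic constructions along the round-trip $\pdd \leadsto \bdd \leadsto \bdd' \leadsto \gamma(\bdd')$ and verifying that the required structural properties survive it.

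The construction proceeds in three steps. First, I apply \textsc{Pdd2Bdd}$(\pdd,\gamma,\pi)$ to obtain a $\pi$-BDD $\bdd$; by \Cref{lem:pdd2bdd}, $\bdd$ is equivalent modulo $\gamma$ to $\pdd$, i.e.\ $\Sem{\bdd}(\gamma^\epsilon)=\Sem{\pdd}(\epsilon)$ for all $\epsilon\colon\Var\ra\Dom$. Second, I apply \textsc{PConsistency}$(\bdd,\gamma,\true)$ to obtain a predicate consistent $\pi$-BDD $\bdd'$; by \Cref{lem:pconsistency}, $\Sem{\bdd'}(\gamma^\epsilon)=\Sem{\bdd}(\gamma^\epsilon)$ for all $\epsilon$. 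Third, I set $\pdd'\eqdef\gamma(\bdd')$, obtained by relabeling every decision node $d$ of $\bdd'$ with the predicate $\gamma(\lambda_{\bdd'}(d))$, and claim that $\pdd'$ is the sought consistent reduced $\pi$-PDD while $\bdd'$ is the representing predicate consistent BDD.

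Verification then splits into three small checks. For \emph{consistency and representation}, $\bdd'$ being predicate consistent w.r.t.\ $\gamma$ means, by definition, that $\gamma(\bdd')=\pdd'$ is consistent and that $\bdd'$ is exactly a predicate consistent BDD representing $\pdd'$. For \emph{reducedness and order}, note that \textsc{PConsistency} returns a reduced ordered $\pi$-BDD and that $\gamma(\cdot)$ leaves the underlying graph and the decision function $\dec$ untouched; hence essentiality ($\dec(d,0)\neq\dec(d,1)$) and the absence of isomorphic subdiagrams transfer verbatim, and $\pi$-orderedness is preserved because $\gamma$ merely renames labels. For \emph{semantic equivalence}, I chain the identities $\Sem{\pdd'}(\epsilon)=\Sem{\bdd'}(\gamma^\epsilon)=\Sem{\bdd}(\gamma^\epsilon)=\Sem{\pdd}(\epsilon)$, where the first equality is the observation (stated before \Cref{alg:compile}) that $\gamma(\bdd')$ is equivalent modulo $\gamma$ to $\bdd'$, and the remaining two come from the two lemmas above.

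The only genuinely delicate point, which I would write out most carefully, is that first equality $\Sem{\gamma(\bdd')}(\epsilon)=\Sem{\bdd'}(\gamma^\epsilon)$: it is the one place where the PDD semantics, which tests each predicate $\gamma(\lambda_{\bdd'}(d))$ against $\epsilon$, must be matched against the BDD semantics, which reads the Boolean bit of $\gamma^\epsilon$. This follows by a routine induction on $\bdd'$, since evaluating $\gamma(\lambda_{\bdd'}(d))$ against $\epsilon$ yields precisely $\gamma^\epsilon(\lambda_{\bdd'}(d))$, so both traversals follow identical branches to the same action node. Everything else is bookkeeping: the two lemmas carry the algorithmic weight, and all structural properties are transported across a relabeling that does not alter the graph.
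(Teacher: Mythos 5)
Your proposal is correct and follows essentially the same route as the paper, which obtains \cref{thm:canonicityPDD} precisely by combining \cref{lem:pdd2bdd} and \cref{lem:pconsistency} and reading the resulting predicate consistent BDD back as a PDD via $\gamma$. Your additional care about the relabeling identity $\Sem{\gamma(\bdd')}(\epsilon)=\Sem{\bdd'}(\gamma^\epsilon)$ and the transfer of reducedness and orderedness only makes explicit what the paper treats as immediate (its remark that any BDD over $\PVar$ trivially exhibits an equivalent reduced ordered PDD $\gamma(\bdd)$), so there is no substantive difference in approach.
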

\subsection{Towards PDD Explanations}
Due to BDDs admitting a canonical representation w.r.t. to a given
variable order and being reduced, BDDs admit minimal functional representations.
This renders BDDs also suitable for explanations: According to Occam’s razor, the best explanation 
for a phenomenon is the most simple one amongst all possible explanations (cf.~\cite{Goo77}).
Still, minimality heavily relies on fixing a variable order and a better explanation is
possible (i.e., a smaller BDD representation) for a different variable order.

\subsubsection{Variable-order Optimization}
Compared to DTs and hence general PDDs, enforcing a variable order comes at its price,
possibly countering concise representation. 
The latter phenomenon is well-known in the field of BDDs where switching to variable trees 
instead of orders can provide exponentially more compact representations~\cite{Dar11}.
Also different variable orders already might yield even more concise representations.

Thanks to our BDD representation of consistent reduced ordered PDDs (see \Cref{thm:canonicityPDD}), the advantages of 
BDDs carry over to PDDs, ensuring concise representations for a given variable order.
Fortunately, this holds also for well-known techniques applicable on BDDs
to mitigate variable order restrictions.
Deciding whether a given variable order is suboptimal is an NP-complete problem~\cite{Bollig}. 
To this end, heuristics have been developed to find good variable orders~(see, e.g., \cite{RicKul08,DubHusKaf24}).
One prominent method that is applicable on already constructed BDDs
is provided by Rudell's sifting method~\cite{Rud93}.
The idea is to permute adjacent variables in the variable order by
swapping levels of decision nodes in-place, leading to a permutation that provides smaller
diagram sizes. The result of one swap operation applied on the 
(consistent reduced ordered) PDD of \Cref{fig:exampleDT} on variables for predicates $x>0$ and $x>2$ 
can be seen in \Cref{fig:compile}. Here, the resulting BDD increases in size and thus,
such a swap might be not considered as favorable towards a better variable ordering.
However, sifting minimizes the BDDs taking on a Boolean interpretation of the variables.
If the BDD represents a PDD (see \Cref{lem:pdd2bdd}) the semantic interpretation of
predicates is not taken into account. To this end, sifting might introduce inconsistent branches, 
i.e., turn even a predicate consistent BDD into an inconsistent one. 
A similar observation has been already drawn when reordering linear DDs~\cite{LinearDD}.
In our previous example of the swap operation
onto \Cref{fig:exampleDT}, this phenomenon can be seen in \Cref{fig:compile}, where $\neg(x>0)$ cannot be satisfied after deciding for $x>2$.
Hence, BDD representations for PDDs can be further reduced after sifting using our
consistency transformation (\Cref{alg:pconsistency}).

\subsubsection{Care-set Reduction} 
In practice, controller policies are usually provided by partial functions, where decisions in certain 
states are not relevant and can be chosen arbitrarily. DT learning algorithms for their explanation exploit
these to reduce their size. Differently, PDDs and BDDs represent total functions,
not exploiting the full potential for even more concise representations.
Fortunately, this application is well-understood in the field of BDD-based symbolic verification, where 
fixed points over partial state domains are crucial towards performant verification algorithms. 
For this, Coudert and Madre introduced the \emph{restrict operator}~\cite{CouMad90} that 
minimizes a given BDD according to a \emph{care set} of variable evaluations.
Technically, nodes whose semantics agree in all evaluations in the care set are merged into existing nodes,
exploiting the sharing in BDDs.
In our setting, we apply the restrict operator on the domain of the controller function as care set.
We then obtain a reduced ordered consistent PDD that has a smaller size and yields the same outcomes for all original state-action pairs used to learn the (total control policy representing) DT.
The outcomes for other state evaluations than in the domain of the state-action pairs are used to reduce the PDD, leading to a total policy representation that is likely to differ from the one of the original DT.
Note that the restrict operation does not introduce new inconsistencies as it does not change the predicate
order in PDDs and only removes nodes on decision paths.

% !TEX root =  ../main.tex
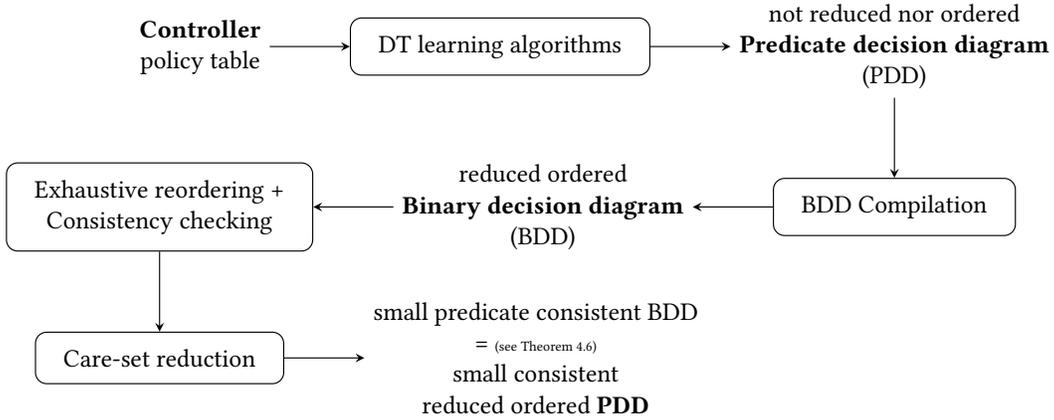
\begin{figure*}[t!]
	\centering
	\resizebox{0.8\textwidth}{!}{
		% !TEX root =  ../main.tex
\begin{tikzpicture}[node distance=1cm,>=stealth,align=center]
	\tikzstyle{algo} = [draw, rounded corners, rectangle, inner xsep=10pt, inner ysep=6pt]
	\tikzstyle{structure} = []
	
	% First Line
	\node (controller) [structure] {\textbf{Controller}\\policy table};
	\node (dtlearning) [algo, right=of controller] {DT learning algorithms};
	\node (pdd) [structure, right=of dtlearning] {not reduced nor ordered\\ \textbf{Predicate decision diagram}\\ (PDD)};
	
	% Second Line
	\node (reordering) [algo, below=of pdd] {BDD Compilation};
	\node (consistencycheck) [structure, left=of reordering] {
		reduced ordered\\
		\textbf{Binary decision diagram}\\ (BDD)};
%	\node (exhaustivereorder) [algo, left=of consistencycheck] {Reordering +\\
%													Consistency transformation};
	\node (ei) [algo, left=of consistencycheck] {Exhaustive reordering + \\
														Consistency checking};
	\node (restricted) [algo, below=of ei] {Care-set reduction};
	\node (bdd) [structure, right=of restricted] {small predicate consistent BDD\\
	= {\tiny(see \cref{thm:canonicityPDD})}\\small consistent\\reduced ordered \textbf{PDD}};
	
	% Arrows
	\draw[->] (controller) -- (dtlearning);
	\draw[->] (dtlearning) -- (pdd);
	\draw[->] (pdd) -- (reordering);
	\draw[->] (reordering) -- (consistencycheck);
	%\draw[->] (consistencycheck) -- (exhaustivereorder);
	\draw[->] (consistencycheck) -- (ei);
	\draw[->] (ei) -- (restricted);
	\draw[->] (restricted) -- (bdd);

\end{tikzpicture}
	}
	\caption{PDD synthesis pipeline used in the experiments} %\rednote{Check if theorem 1 holds}
	\label{fig:pipeline:eval}
\end{figure*}
\section{Experimental Evaluation}\label{sec:evaluation}

In this section, we describe our experimental setup and compare several kinds of controller representations:
BDDs with bit-blasting (bbBDDs), DTs, and reduced ordered PDDs (including reordered and consistent variants). 
Our experimental results show that, i) PDDs are almost as effective as a controller representation as DTs (answering \ref{rq1a}), ii) PDDs are more compact and explainable than bbBDDs (answering \ref{rq1b}), and iii) our reduction methods for PDDs are effective for controller representation (answering \ref{rq2}).

\inlineheadingbf{Synthesis Pipeline}
We implemented our approach in the tool \dtcontrol~\cite{dtcontrol2}
using a python wrapper to the BDD library \buddy~\cite{buddy}.
\Cref{fig:pipeline:eval} shows the pipeline we used to construct consistent reduced ordered PDDs
from tabular policies.
DTs are learnt from given controller policies (with axis-aligned predicates using entropy as the impurity measure), 
compiling them into BDDs by \Cref{alg:compile}, optimizing their variable ordering through reordering, and ensuring their predicate consistency before applying care-set reduction.

\inlineheadingbf{BDD Implementations}
We report on two different kinds of BDD representations for policies: bit-blasted BDDs (bbBDDs) and BDD representations for (reduced ordered) PDDs. 
The former follow the classical approach of a binary encoding of state variables' domains and constructing a BDD representation for the (partial) policy $\sigma\colon \Eval \ra \power{\Act}$ from the data set. The variable order for bbBDDs is initialized by replacing the order of state variables by the encoded bit vector blocks, followed by variable reordering through sifting until convergence.

For the BDDs constructed via our PDD pipeline, we initially chose a predicate variable order arising from a breadth-first traversal of the input DT. This ensured that the variable ordering used for BDD construction closely mirrors the structure of the learned DT.

Note that towards a more fair comparison we do not include the multiplicities of action nodes in DTs and only count decision nodes.
Further, to ensure a more fair comparison towards explainability, bbBDDs 
are not encoding actions binary and not choosing a complemented edge representation as done, e.g., in \cite{dtcontrol2}.

\inlineheadingbf{Benchmarks}
For evaluation, and to align with the existing comparison of DTs and bbBDDs, we use the standard benchmarks of the tool \dtcontrol~\cite{dtcontrol2}.\cwcomment{Maybe let's sell this as a feature... "To enhance the reproducibility and comparability of our evaluation, we utilize the identical set of benchmarks as outlined in..."} The benchmarks contain permissive policies for cyber-physical systems exported from  \scots~\cite{scots} and \uppaal~\cite{uppaal} and from the Quantitative Verification Benchmark Set~\cite{qcomp}, also including models from the \prism\ Benchmark Suite~\cite{KNP12b}. 
These case studies were solved using \storm~\cite{storm} and exported as \textsc{JSON} files. 
The policies given by \scots\ or \uppaal\ are permissive. \storm\ only exports deterministic policies.

\inlineheadingbf{Experiment Setup}
Our experiments were executed on a desktop machine with the following configuration:
one \SI{4.7}{GHz} CPU (AMD Ryzen\textsuperscript{\texttrademark} 7 PRO 5750G) with 16 processing units (virtual cores),
\SI{66}{GB} RAM, and 22.04.1-Ubuntu as operating system.

\inlineheadingbf{Extensions}
Our implementation also supports PDDs with linear predicates, i.e.,
where the decision node could contain an inequality of a linear combination of variables.
We restricted ourselves to axis-aligned predicates (that use only one variable in the decision nodes) as they are easier to interpret. Further,
in our initial experiments, we found that using linear predicates 
(as in the case of linear DDs~\cite{LinearDD})
also resulted in less sharing and thus larger PDDs.
For our pipeline, the reductions can be also put into different order. However, we here chose the order that led to the best results on average.

\subsection{Concise PDD Representation}
% !TEX root =  ../main.tex

\begin{table*}[h!]
	\centering
	\caption{Sizes of controller representations: explicit states, as bit-blasted BDD, learnt DT, and in each step of the PDD pipeline. The upper part lists permissive policies from \scots\ and \uppaal, while the lower part lists deterministic policies from \storm. The smallest value in each row is written in bold. PDD explainability is evaluated w.r.t. final PDDs after care-set reduction.\label{tab:merged}}
	\setlength{\tabcolsep}{7pt}
	\begin{adjustbox}{width=\textwidth,center}
	\begin{tabular}{l|rrr|rrrr|r|rrr}
		\toprule
		 & \multicolumn{3}{c|}{Classical Representations} & \multicolumn{4}{c|}{PDD Pipeline} & \multicolumn{4}{c}{PDD Explainability}\\
		Controllers & States & bbBDD & DT & Plain & Reordered & Consistent & Care-set & {\#}shared & 
		$\frac{\mathit{DT}}{\mathit{bbBDD}}$ & $\frac{\mathit{PDD}}{\mathit{bbBDD}}$ & $\frac{\mathit{PDD}}{\mathit{DT}}$ \\[2mm]
		\midrule
		10rooms                 & 26244    & 1102  & 8648         		& 1332   & 419    & \textbf{344}   & \textbf{344}   & 211   & 7.85  & 0.31  & 0.04  \\
		cartpole                & 271      & 197   & \textbf{126} 		& 206    & 172    & 133   & \textbf{126}   & 0     & 0.64  & 0.64  & 1     \\
		cruise-latest           & 295615   & 2115  & 493          		& 1091   & 554    & 479   & \textbf{476}   & 66    & 0.23  & 0.23  & 0.97  \\
		dcdc                    & 593089   & 814   & \textbf{135} 		& 149    & \textbf{135}    & \textbf{135}   & \textbf{135}   & 0     & 0.17  & 0.17  & 1     \\
		helicopter              & 280539   & 3348  & 3169         		& 10294  & 5577   & 3276  & \textbf{3158}  & 486   & 0.95  & 0.94  & 1     \\
		traffic\_30m            & 16639662 & 4522  & 6286         		& 11088  & 5461   & 2497  & \textbf{2350}  & 1538  & 1.39  & 0.52  & 0.37  \\
		\midrule    
		beb.3-4.LineSeized      & 4173     & 1051  & \textbf{32}  		& 33     & \textbf{32}     & \textbf{32}    & \textbf{32}    & 0     & 0.03  & 0.03  & 1     \\
		blocksworld.5           & 1124     & 4043  & \textbf{617} 		& 2646   & 1742   & 1526  & 796            & 13    & 0.15  & 0.20  & 1.29  \\
		cdrive.10               & 1921     & 6151  & \textbf{1200}		& 9442   & 3828   & 3828  & \textbf{1200}  & 0     & 0.20  & 0.20  & 1     \\
		consensus.2.disagree    & 2064     & 112   & 33           		& 48     & 36     & 33    & \textbf{31}    & 1     & 0.29  & 0.28  & 0.94  \\
		csma.2-4.some\_before   & 7472     & 1172  & \textbf{51}  		& 78     & 60     & 58    & 53             & 2     & 0.04  & 0.05  & 1.04  \\
		eajs.2.100.5.ExpUtil    & 12627    & 1349  & 83  		& 108    & \textbf{81}     & \textbf{81}    & 85             & 7     & 0.06  & 0.06  & 1.02  \\
		echoring.MaxOffline1    & 104892   & 48765 & \textbf{934} 		& 4429   & 1288   & 1274  & 970            & 94    & 0.02  & 0.02  & 1.04  \\
		elevators.a-11-9        & 14742    & 6790  & 8163         		& 16563  & 6126   & 6077  & \textbf{5555}  & 2756  & 1.20  & 0.82  & 0.68  \\
		exploding-blocksworld.5 & 76741    & 39436 & \textbf{2490} 		& 14857  & 6648   & 6504  & 3635           & 1346  & 0.06  & 0.09  & 1.46  \\
		firewire\_abst.3.rounds & 610      & 51    & \textbf{12}  		& \textbf{12}     & \textbf{12}     & \textbf{12}    & \textbf{12}    & 0     & 0.24  & 0.24  & 1     \\
		ij.10                   & 1013     & \textbf{415} & 645   		& 458    & 452    & 452   & 453            & 222   & 1.55  & 1.09  & 0.70  \\
		pacman.5                & 232      & 440   & \textbf{21}  		& 28     & 23     & 23    & \textbf{21}    & 0     & 0.05  & 0.05  & 1     \\
		philosophers-mdp.3      & 344      & 251   & \textbf{195} 		& 310    & 246    & 205   & 203            & 5     & 0.78  & 0.81  & 1.04  \\
		pnueli-zuck.5           & 303427   & \textbf{59217} & 85685 	& 1304402 & 496033& 73139 & 72192          & 26941 & 1.45  & 1.22  & 0.84   \\
		rabin.3                 & 704      & 301   & \textbf{55}  		& 80     & 62     & 59    & 58             & 0     & 0.18  & 0.19  & 1.05  \\
		rectangle-tireworld.11  & 241      & 259   & \textbf{240} 		& 281    & 274    & \textbf{240}   & \textbf{240}   & 0     & 0.93  & 0.93  & 1     \\
		triangle-tireworld.9    & 48       & 38    & \textbf{13} 	 	& 14     & 14     & \textbf{13}    & \textbf{13}    & 0     & 0.34  & 0.34  & 1     \\
		wlan\_dl.0.80.deadline  & 189641   & 6591  & \textbf{1684}		& 13763  & 3513   & 2004  & 1824           & 199   & 0.26  & 0.28  & 1.08  \\
		zeroconf.1000.4.true 	& 1068 	   & 520   & \textbf{41}  		& 60     & 55     & 56    & 44             & 1     & 0.08  & 0.08  & 1.07 \\
		\midrule 
		\multicolumn{9}{r|}{Geometric means} & 0.27 & 0.23 & 0.84\\
		\bottomrule    
	\end{tabular}
	\end{adjustbox}
	% ⁇
\end{table*}
Towards answering \ref{rq1a} and \ref{rq1b}, we conducted several experiments
those results are shown in \Cref{tab:merged}.
Here, we investigated the representation gaps between bbBDDs, PDDs, and DTs.

\subsubsection{Gap Between Decision Diagrams and DTs}
Using the classical BDD-based representation of control policies through bit-blasted BDDs (bbBDDs), there is a well-known gap between bbBDDs and DT sizes~\cite{dtcontrol2}.
The key question is whether PDDs can be used instead of DTs without efficiency drop (increase in size).
For this, we first report on the degree of closing the gap defined as
\begin{center}$R_{\mathit{gap}} = 
\left(\mathit{bbBDD}_{\mathit{}} - \mathit{PDD}_{\mathit{}}\right)
/
\left(\mathit{bbBDD}_{\mathit{}} - \mathit{DT}_{\mathit{}}\right)$\end{center}
where $\mathit{bbBDD}$, $\mathit{DT}$, and $\mathit{PDD}$ denote the number of decision nodes in the smallest bbBDD that we found, the DT generated using \dtcontrol, and the reduced ordered concise PDDs generated using our synthesis pipeline, respectively. 
An $R_{\mathit{gap}}$ value closer to $1$ means a PDD improves almost to the level of DTs in term of size,
while a value closer to $0$ means a PDD remains as bad as bbBDDs.
The computation excludes the cases (three in total)
where the DTs are larger than bbBDDs and the PDDs are smaller
(\emph{10rooms}, \emph{traffic\_30m},  and \emph{elevators.a-11-9}), i.e., where PDDs are even smaller than the DTs.
We bridge the gap between BDD and DT sizes on average by 88\%.
Consequently, we claim that PDD can safely replace DT without serious risks of efficiency decrease.

\begin{tcolorbox}[boxsep=1pt,left=3pt,right=3pt,top=1pt,bottom=1pt]
Towards \ref{rq1b} and \ref{rq1a}, reduced ordered consistent PDDs can close the well-known gap between bbBDDs and DTs for concise control policy representation.
\end{tcolorbox}

\subsubsection{Comparison With Bit-blasted BDDs}
For representing control policies, reduced ordered consistent PDDs can be expected to be more explainable than bbBDDs:
First, BDDs need multiple Boolean variables to represent state variables, which makes the representation less comprehensible. For instance, if there is a simple predicate $\mathit{temp}\leq 19$, then already at least five decisions on five Boolean variables (required to numerically represent 19 through bit-blasting) have to be made.
A single predicate used in a PDD cannot only express the conditions on state variables,
but also often captures the relation between multiple state variables easily.
An example for this can be found in \cref{app:tireworld}.

Second, contributing to explainability by Occam's Razor, PDDs can be significantly more concise compared to bbBDDs, as
we show in \Cref{tab:merged}. Here, PDDs are on (geometric) average
$77\%$ smaller than bbBDDs.
This is an improvement over DTs, which are $73\%$ smaller than bbBDDs.
In \Cref{fig:pdd_vs_bdd}, we provide an overview by comparing two normalized ratios:
(i) \emph{bbBDD size ratio} (ratio of the size of the constructed bbBDDs to the number of states in the system); and
(ii) \emph{PDD size ratio} (ratio of the size of the constructed PDDs to the number of states in the system).
With the exception of two cases (\emph{ij.10} and \emph{pnueli-zuck.5}), the PDDs are notably smaller than the bbBDDs.
In some cases (\emph{blocksworld.5} and \emph{cdrive.10}), the number of decision nodes in the bbBDDs are three times the size of the policy tables,
not offering a smaller representation of a policy.
In contrast, PDDs consistently succeed in providing compact representations.

\begin{figure}[t]
	\centering
		\includegraphics[width=0.7\linewidth]{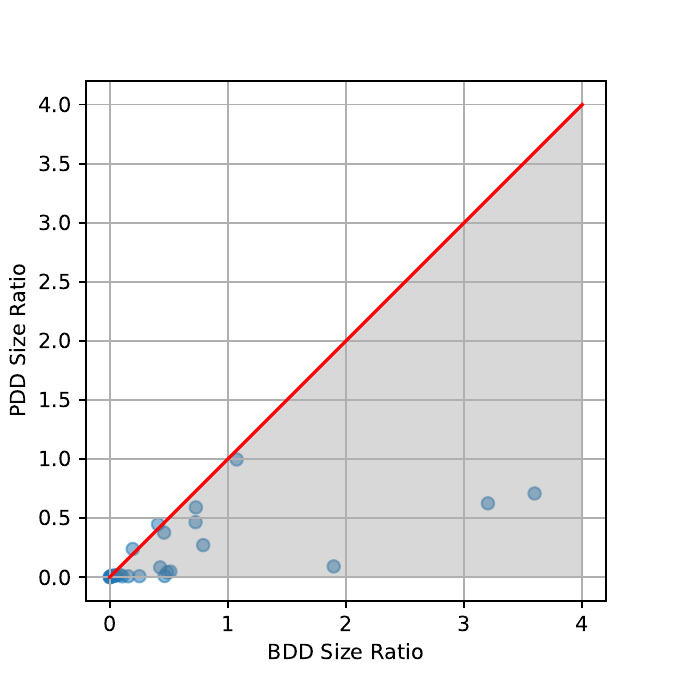}
		\caption{Comparison of normalized sizes of constructed PDDs and bbBDDs. Ratios of PDD (bbBDD, respectively) sizes to the number of states in the represented controller.}
		\label{fig:pdd_vs_bdd}
		\vspace{-1em}
\end{figure}%
\begin{tcolorbox}[boxsep=1pt,left=3pt,right=3pt,top=1pt,bottom=1pt]
For \ref{rq1b}, reduced ordered consistent PDDs provide on average a more concise representation of control policies than bbBDDs.
\end{tcolorbox}

\subsubsection{Comparison with DTs}
PDDs use predicates as in DTs, rendering decisions interpretable. 
However, due to merging isomorphic subdiagrams, common decision making can also be revealed through very same decision nodes, adding a component of explainability in contrast to DTs.
A similar improvement in explainability can also be observed in software engineering when avoiding code duplication~\cite{zuse2019software,DubKloPas24}.
Further, node sharing often decreases the size of the diagram, adding explainability following Occam's Razor.
Thus, to assess the effectiveness of PDDs for explainability, we need to consider the effect of sharing of nodes due the subgraph merging. 

In terms of size, we examine whether reduction rules (such as subgraph merging) compensate for the strict order of predicates.
As reported in \Cref{tab:merged}, PDDs are on (geometric) average
$16\%$ smaller than the DTs.
In \Cref{fig:pdd_vs_dt}, we provide an overview by comparing two normalized ratios:
(i) \emph{DT size ratio} (ratio of the size of the constructed DTs to the number of states in the system); and
(ii) \emph{PDD size ratio} (ratio of the size of the constructed PDDs to the number of states in the system).
In $8$ of the examples, the generated PDDs have the same size as the DTs. 
For some large controllers, we have produced smaller PDDs in comparison to the DT representation. 

\begin{figure}[t]
	\centering
		\includegraphics[width=0.7\linewidth]{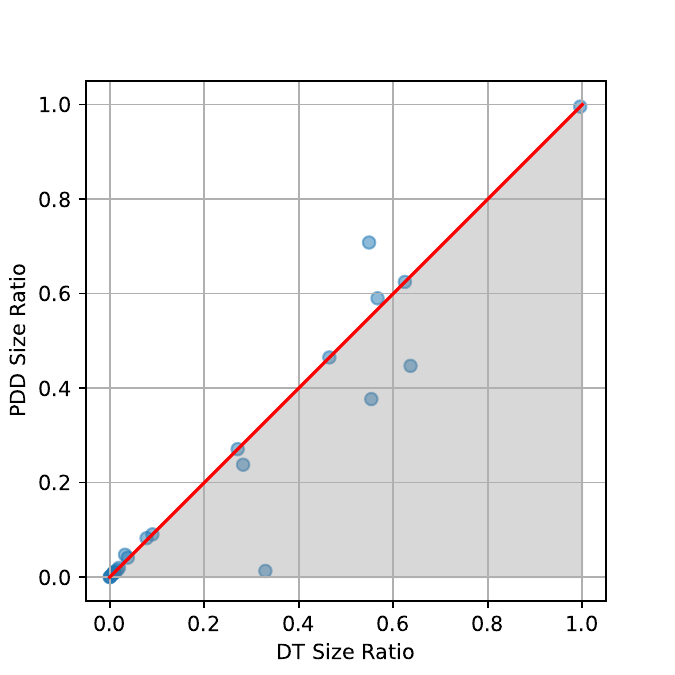}
		\caption{Comparison of normalized sizes of PDDs and DTs. For a structure $\mathcal{S} \in \{DT, PDD\}$, the $\mathcal{S}$ size ratio is the ratio of the size of $\mathcal{S}$ to the number of states $|S|$ in the system.}
		\label{fig:pdd_vs_dt}
\end{figure}
Is the case of \emph{10rooms}, where we compressed the DT by a factor of $25$ while producing the PDD.
Interestingly, the cases where bbBDDs provide smaller representation than DTs 
show that PDDs have either created the smallest representation among the three (in case of \emph{10rooms}, \emph{traffic\_30m}, \emph{elevators.a-11-9}) or managed to reduce the gap of size (in case of \emph{ij.10}, \emph{pnueli-zuck.5}).

The column marked with ``\#shared'' in \Cref{tab:merged} gives the number of \emph{shared nodes} in PDDs. In two cases (\emph{10rooms} and \emph{traffic\_30m}), the PDDs contain more than $60\%$ shared nodes (nodes with at least two parent edges).
Note that, the DT-learning algorithms are optimized to create small tree-like structures without node sharing.
As a result, only in larger models, we encounter isomorphic subdiagrams which can be merged while creating shared nodes in PDDs.

To illustrate this, we used the Israeli Jalfon randomized self-stabilizing protocol~\cite{ij} from the \prism\ benchmarks.
We extracted policies 
for different number of processes ($3$ to $17$)
and created PDD representations of the policies and observed the ratio of shared nodes in the PDD.
With increasing number of processes, the number of states in the system grows exponentially.
Smaller models have no shared nodes, but as the number of processes increases, the percentage of shared nodes in the PDD increases as well (See \Cref{fig:ij}; for a detailed description, see \cref{app:ij}).
For example, with $17$ processes, out of $34572$ nodes in the PDD, more than $80\%$ nodes have at least two parent edges. 
This demonstrates that PDDs become more efficient in larger models.
\begin{tcolorbox}[boxsep=1pt,left=3pt,right=3pt,top=1pt,bottom=1pt]
Answering \ref{rq1a}, PDDs provide on average a more concise representation of control policies than DTs,
largely benefitting from node sharing and common decision making.
\end{tcolorbox}

\subsection{Ablation Studies}
\Cref{tab:merged} shows the impact of each step of our PDD synthesis pipeline (see \Cref{fig:pipeline:eval}). 
Towards a fair comparison with DTs, we only counted the number of decision nodes and excluded duplicated action nodes that would massively add more nodes to DTs compared to DDs.
We see that imposing an order on predicates usually increases the number of nodes and does not outweigh the possibility of merging isomorphic subdiagrams. Most drastically, this can be seen with \emph{pnueli-zuck.5}, where the resulting PDD size is more than 15 times as big as the DT.
Reordering then has great impact, reducing the sizes of the PDDs significantly, especially for the larger examples. For \emph{pnueli-zuck.5}, it more than halves the number of nodes.
As apparent from \Cref{tab:merged}, consistency and care-set reduction have not always big impact but can also reduce PDD sizes: ensuring consistency can lead to 85\% reduction (\emph{pnueli-zuck.5}) and care-set reduction can reduce diagram sizes up to 68\% (\emph{cdrive.10}).

\begin{tcolorbox}[boxsep=1pt,left=3pt,right=3pt,top=1pt,bottom=1pt]
Towards \ref{rq2}, we conclude that consistency checking, reordering, and care-set reductions are all effective for reducing PDD sizes and thus improve control strategy explanation.
\end{tcolorbox}

\begin{figure}[t]
	\centering
	\includegraphics[width=\linewidth]{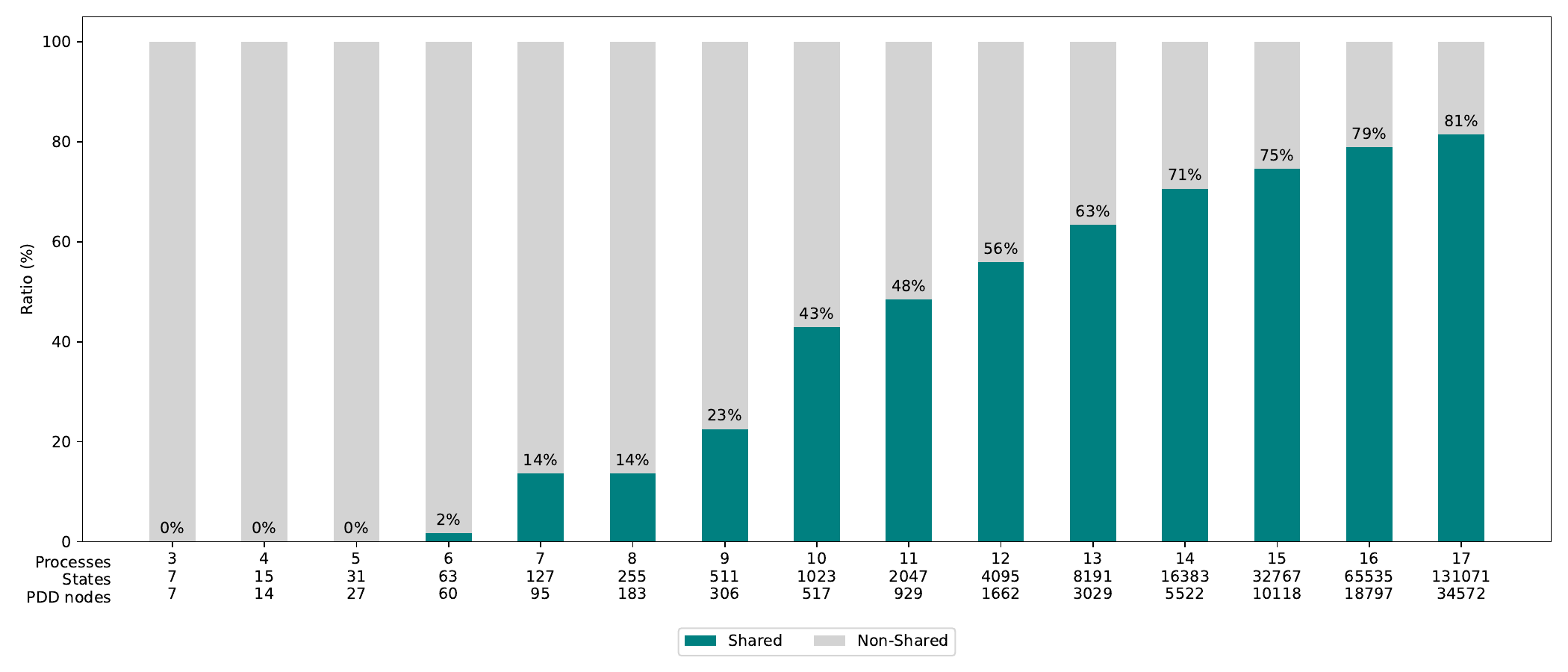}
	\caption{Fraction of shared nodes in PDDs for a self-stabilizing protocol with increasing number of processes.}
	\label{fig:ij}
\end{figure}
	
% !TEX root =  ../main.tex
\section{Conclusion}

Binary decision diagrams (BDDs) provide concise representations of data for which mature theory and broad tool support is available. Differently, decision trees (DTs) are well-known for their interpretability and enjoy many performant learning algorithms.
Due to their predicates, DTs render also more explainable than BDDs with classical bit-blasting non-binary variables into bit-vectors.
We introduced \emph{predicate decision diagrams (PDDs)} along with a synthesis pipeline to generate PDDs from DTs and exploit BDD reduction techniques towards concise representations.
With PDDs, we established a representation for compact control policies that unite the benefits of both data structures, BDDs and DTs.
We found that the sizes of PDDs are on par with DT in most of the control policies we investigated.
This improves the state of the art in BDD-based representations such that PDDs can now be used as a viable alternative to DT as explainable data structure, but with also indicating common decision making through subdiagram merging.

For future work, an integration into state-of-the-art BDD packages~\cite{HusDubHer24} to directly support predicates and extend linear DDs~\cite{LinearDD} would improve applicability. Further, adaptions of approximation algorithms on PDDs \cite{RavMcMShi98,DubWir25} could extract the essence of control strategies in even smaller and thus more explainable PDDs.
It would be also interesting to establish theoretical guarantees on PDD sizes~\cite{SolPol23} or regarding different explainability metrics to underpin the benefits of PDDs.
We further envision a PDD learning algorithm that specifically prefers choices of predicates to increase sharing and could directly replace DT learning algorithms and benefit from the advantages of PDDs we showed in this paper.

\begin{acks}
	Authors in alphabetic order. This work was partially supported by the DFG under the projects
TRR 248 (see \url{https://perspicuous-computing.science}, project ID 389792660) and
EXC 2050/1 (CeTI, project ID 390696704, as part of Germany's Excellence Strategy), by the NWO through Veni grant VI.Veni.222.431, and the MUNI Award in Science and Humanities (MUNI/I/1757/2021) of the Grant Agency of Masaryk University.
\end{acks}

\bibliographystyle{ACM-Reference-Format}
\bibliography{ref}

\clearpage
\appendix
% !TEX root =  main.tex
\section*{Appendix}
\renewcommand{\thesubsection}{\Alph{subsection}}
\sisetup{round-mode=places,round-precision=3}

\subsection{Proofs of \cref{sec:MTPDDs}}
\pddbdd*
\begin{proof}
	Let $\pdd=(N,\Var,\Act,\lambda,\dec,r)$, $\gamma\colon\PVar\ra\lambda(D)$, 
	$\pi$ a total order over $\PVar$, and $\epsilon\colon\Var\ra\Dom$.
	We prove the statement by induction.
	For the base case of the induction, if $\lambda(r)\subseteq\Act$, we clearly have $\Sem{\bdd}=\lambda(r)=\Sem{\pdd}$ and
	$\bdd$ is a $\pi$-BDD (see \cref{l:base}). 
	For the induction step, for $b\in\Bool$ assume $\Sem{\bdd_b}(\gamma^\epsilon)=\Sem{\pdd_{\dec(r,b)}}(\epsilon)$. 
	Then let $i=\gamma^\epsilon(\gamma^{-1}(\lambda(r))$ and thus,
	$\Sem{\bdd}(\gamma^\epsilon)=\Sem{\bdd_i}(\gamma^\epsilon)$ according to the semantics 
	of the \textsc{ITE} operator (see \cref{l:ite}). 
	Hence, $i=1$ iff $\epsilon\models\gamma(\gamma^{-1}(\lambda(r)))$ iff 
	$\epsilon\models\lambda(r)$ since $\gamma$ is a bijection. 
	By the definition of PDD semantics,
	$\Sem{\pdd}(\epsilon)=\Sem{\pdd_{\dec(r,i)}}(\epsilon)$ and hence, $\Sem{\bdd}(\gamma^\epsilon)=
	\Sem{\pdd}(\epsilon)$.
\end{proof}

\pconsistency*
\begin{proof}
	Let us first assume that $\epsilon\models\phi$ and 
	show that $\bdd'=\textsc{PConsistency}(\bdd,\gamma,\phi)$ preserves
	semantics, i.e., $\Sem{\bdd}(\gamma^\epsilon)=\Sem{\bdd'}(\gamma^\epsilon)$.
	We prove the statement by induction.
	For the base case of the induction, for $\lambda(r)\subseteq\Act$, we clearly have 
	$\Sem{\bdd}(\gamma^\epsilon)=\lambda(r)=\Sem{\bdd'}(\gamma^\epsilon)$ and
	$\bdd'$ is a $\pi$-BDD (see \cref{l:base}). 
	For the induction step, assume that $\Sem{\bdd'_1}(\gamma^\epsilon)=
	\Sem{\bdd_{\dec(r,1)}}(\gamma^\epsilon)$ and $\Sem{\bdd'_0}(\gamma^\epsilon)=
	\Sem{\bdd_{\dec(r,0)}}(\gamma^\epsilon)$.
	If $\epsilon\models\gamma(\lambda(r))$ then $\gamma^\epsilon(\lambda(r)) = 1$, which leads to 
	$\Sem{\bdd}(\gamma^\epsilon) = \Sem{\bdd_{\dec(r,1)}}(\gamma^\epsilon)$
	by the semantics of BDDs. Further, $\phi\land\gamma(\lambda(r))$ is clearly satisfiable 
	(witnessed by $\epsilon$) and thus $\Sem{\bdd'}_\phi(\gamma^\epsilon) = \Sem{\bdd'_1}(\gamma^\epsilon)$
	by the definition of the ITE operator (\cref{l:bothsat}) or directly by
	setting $\bdd'$ to $\bdd'_1$ (\cref{l:s1sat}). 
	Hence, $\Sem{\bdd}(\gamma^\epsilon)= \Sem{\bdd'}(\gamma^\epsilon)$ by induction hypothesis
	for the case $\epsilon\models\gamma(\lambda(r))$.
	Likewise, if $\epsilon\models\neg\gamma(\lambda(r))$ then $\gamma^\epsilon(\lambda(r)) = 0$, which leads to 
	$\Sem{\bdd}_\phi(\gamma^\epsilon) = \Sem{\bdd_{\dec(r,0)}}(\gamma^\epsilon)$ and
	$\phi\land\neg\gamma(\lambda(r))$ being satisfiable. Thus, 
	$\Sem{\bdd'}(\gamma^\epsilon) = \Sem{\bdd'_0}(\gamma^\epsilon)$
	by the definition of the ITE operator (\cref{l:bothsat}) or directly by
	setting $\bdd'$ to $\bdd'_0$ (\cref{l:s0sat}). By induction hypothesis, we obtain
	$\Sem{\bdd}(\gamma^\epsilon)= \Sem{\bdd'}(\gamma^\epsilon)$ 
	for the case $\epsilon\models\neg\gamma(\lambda(r))$.
	\smallskip
	
	\noindent It is left to show that $\bdd'$ is predicate consistent.
	Let $\rho$ be a path in $\bdd'$ where\vspace{-1em}
	\[ \rho = d_0,b_0,d_1,\ldots,d_k \in (D'\times \Bool)^\ast\times (N'{\setminus}D')
	\]
	Then for each $i<k$ the node $d_i$ is the result of an ITE operation (\cref{l:bothsat})
	in a call of $\textsc{PConsistency}$ with context $\phi_i$, since this is the only occasion where 
	nodes are generated during recursive calls of $\textsc{PConsistency}$. 
	Towards reaching \cref{l:bothsat}, $\phi_i\land\gamma(\lambda'(d_i))$ and 
	$\phi_i\land\neg\gamma(\lambda'(d_i))$ have been both be satisfiable where the conjunction $\phi_i$
	contains all $\gamma(\lambda'(d_j))$ for $j<i$ with $b_j=1$ and $\neg\gamma(\lambda'(d_j))$
	for $j<i$ for $b_j=0$. Thus, for all $i<k$ there are evaluations $\epsilon^0_i,\epsilon^1_i\colon\Var\ra\Dom$ 
	where $\epsilon^0_i\models\phi_i\land\neg\gamma(\lambda'(d_i))$ and 
	$\epsilon^1_i\models\phi_i\land\gamma(\lambda'(d_i))$, leading to $\epsilon=\epsilon^{b_{k-1}}_{k-1}$ being
	an evaluation where $\gamma^\epsilon(\lambda(d_i)) = b_i$ for all $i<k$.
	Hence, $\epsilon$ serves as witness for $\rho$ being predicate consistent.
\end{proof}

\subsection{Case Study: Triangle-tireworld}\label{app:tireworld}
Consider the example of triangle tireworld~\cite{e2014progressive} from the QCOMP benchmark set~\cite{qcomp} as describled in \cref{fig:tireworld}.
The objective is to find the policy to reach the location $2$ with maximum probability. 
This system is modeled as a {\textsc{JANI}} file. It contains multiple state variables:
$l\in\{0,5\}$ denoting the locations, \emph{has\_spare} denoting where the car already has a spare tire, \emph{spare\_$i$} for $i\in\{3,4,5\}$ denotes if there is a spare tire at location $i$.

\begin{figure*}[h]
	\centering
	\begin{subfigure}{0.4\textwidth}
		\centering
		\begin{tikzpicture}[node distance=1.3cm,on grid,
	]
	\tikzset{
		mynode/.style={state, minimum size=1.5em, inner sep=1pt}
	}	
	\node[mynode, initial] 	(s0)						{$0$};
	\node[mynode]         	(s1) [above right=of s0] 	{$1$};
	\node[mynode, accepting]		(s2) [above right=of s1] 	{$2$};
	\node[mynode, fill=gray, text=white]         	(s3) [below right=of s0] 	{$3$};
	\node[mynode, fill=gray, text=white]         	(s4) [below right =of s1] 	{$4$};
	\node[mynode, fill=gray, text=white]         	(s5) [below right=of s3] 	{$5$};
\draw[->] (s0) -- (s1);
\draw[->] (s1) -- (s2);	
\draw[->] (s0) -- (s3);
\draw[->] (s3) -- (s5);	
\draw[->] (s1) -- (s4);
\draw[->] (s5) -- (s4);	
\draw[->] (s4) -- (s2);
\draw[->] (s3) -- (s1);

\end{tikzpicture}
		\caption{Full network}
		\label{fig:tireworld}
	\end{subfigure}%
	\begin{subfigure}{0.4\textwidth}
		\centering
		\begin{tikzpicture}[node distance=1.3cm,on grid,
	]
	\tikzset{
		mynode/.style={state, minimum size=1.5em, inner sep=1pt}
	}	
	\node[mynode, initial] 	(s0)						{$0$};
	\node[mynode]         	(s1) [above right=of s0] 	{$1$};
	\node[mynode, accepting]		(s2) [above right=of s1] 	{$2$};
	\node[mynode, fill=gray, text=white]         	(s3) [below right=of s0] 	{$3$};
	\node[mynode, fill=gray, text=white]         	(s4) [below right =of s1] 	{$4$};
	\node[mynode, fill=gray, text=white]         	(s5) [below right=of s3] 	{$5$};
	%	
%	\draw[->] (s0) -- (s1);
	\draw[->] (s1) -- (s2);	
	\draw[->] (s0) -- (s3);
	\draw[->] (s3) -- (s5);	
%	\draw[->] (s1) -- (s4);
	\draw[->] (s5) -- (s4);	
	\draw[->] (s4) -- (s2);
%	\draw[->] (s3) -- (s1);
	
\end{tikzpicture}
		\caption{Optimal route}
		\label{fig:tireworld-policy}
	\end{subfigure}
	\caption{Triangle tireworld : A car has to move from location $0$ to location $2$ in a directed graph like network (Fig. a). 
		When the car drives a segment of the route, with $0.5$ probability, the car can get a flat tire. 
		Spare tires are only available in locations $3$, $4$ and $5$. 
		The car starts with a spare tire.
		The optimal route (Fig. b) is suggested by the policy generated by \storm.
}
	\label{fig:tireworld-both}
\end{figure*}
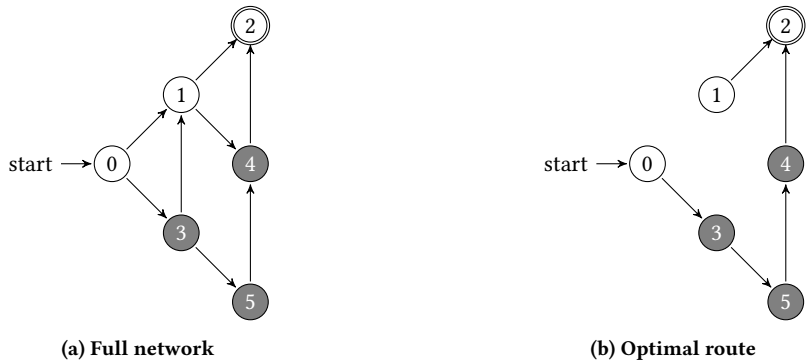
Using \storm, we construct a controller policy defined for $48$ states. 
The PDD generated from the policy is described in \cref{fig:PDD-tireworld}.
In comparison to a policy table or an BDD, this representation is not only smaller, but also explainable:
\begin{itemize}
	\item Upon visiting locations $3$, $4$ and $5$, take the spare tire.
	\item Follow the path described in \cref{fig:tireworld-policy}.
\end{itemize}
In contrast, a na\"ive approach to create a BDD would need more Boolean variables to represent the location $l$ through bit-blasting, consequently resulting into a larger BDD.
Indeed, without the help of the predicates derived from the DT,
we obtained a BDD with $38$ inner nodes, almost $3$ times the number of nodes of the PDD.

\begin{figure}[h]
	\centering
	\includegraphics[width=\textwidth]{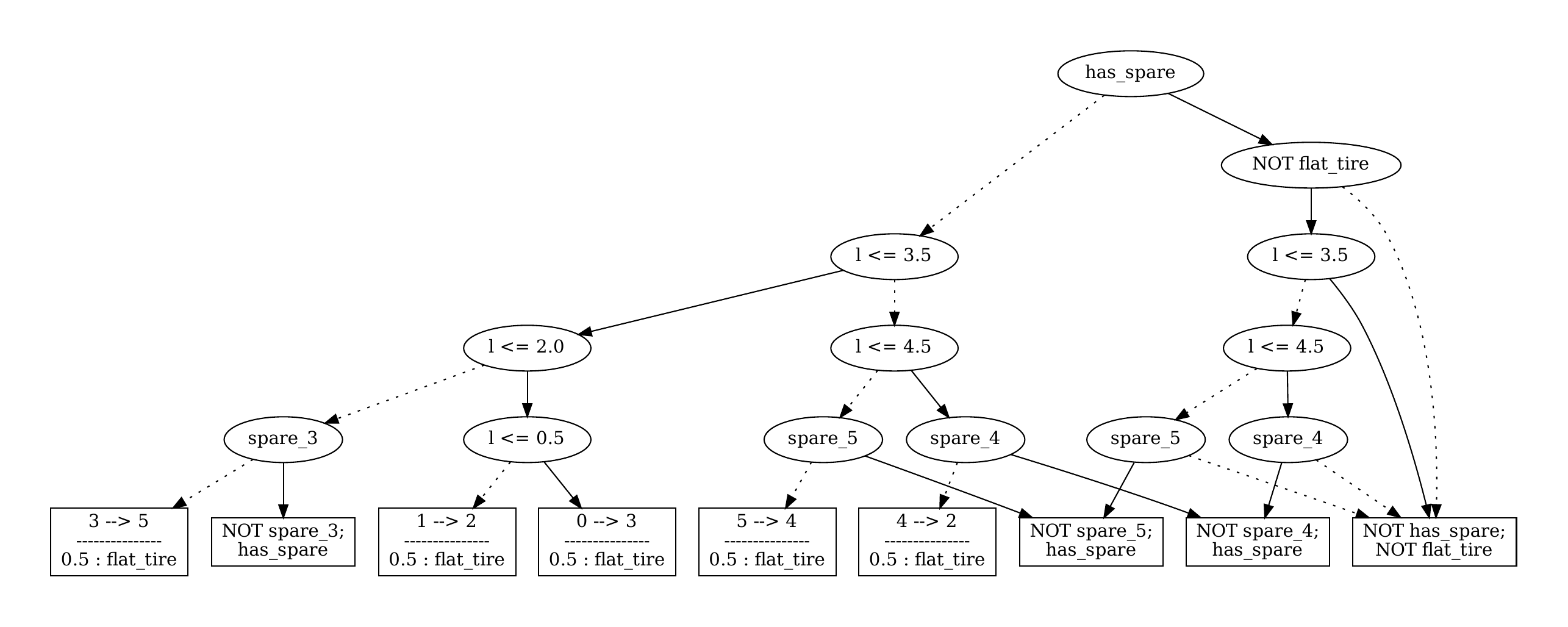}
	\caption{PDD for \emph{triangle-tireworld.9}}
	\label{fig:PDD-tireworld}
\end{figure}

\subsection{Controller to BDDs}\label{app:bdd}
\begin{table*}[h]
		\caption{Size of the BDDs produced by the two implementations. We select the minimum of  based on which we selected the value for consideration.}
		\label{table:nodes}
	\setlength{\tabcolsep}{10pt}
%		\resizebox{\textwidth}{!}{
		\begin{tabular}{l@{\hskip 2mm}|r| r r r| r}
		\toprule
       &  & \multicolumn{3}{c|}{Controller to BDD (grouping)}  &   \\
       %\midrule
		Controller                        & \shortstack{Used in \\ \dtcontrol \\ paper \\\cite{dtcontrol2}} & \multicolumn{1}{l}{Total} & \multicolumn{1}{l}{Inner} & {Action} & {Selected} \\
		\midrule
		triangle-tireworld.9              & 51                   & 57    & 38    & 29    & 38    \\
		pacman.5                          & 330                  & 479   & 440   & 105   & 330   \\
		rectangle-tireworld.11            & 498                  & 743   & 259   & 485   & 259   \\
		philosophers-mdp.3                & 295                  & 311   & 251   & 233   & 251   \\
		firewire\_abst.3.rounds           & 61                   & 77    & 51    & 11    & 51    \\
		rabin.3                           & 303                  & 326   & 301   & 58    & 301   \\
		ij.10                             & 436                  & 436   & 415   & 182   & 415   \\
		zeroconf.1000.4.true.correct\_max & 386                  & 566   & 520   & 71    & 386   \\
		blocksworld.5                     & 3985                 & 4414  & 4043  & 663   & 3985  \\
		cdrive.10                         & 5134                 & 8058  & 6151  & 865   & 5134  \\
		consensus.2.disagree              & 138                  & 138   & 112   & 5     & 112   \\
		beb.3-4.LineSeized                & 913                  & 1109  & 1051  & 43    & 913   \\
		csma.2-4.some\_before             & 1059                 & 1241  & 1172  & 130   & 1059  \\
		eajs.2.100.5.ExpUtil              & 1315                 & 1418  & 1349  & 113   & 1315  \\
		elevators.a-11-9                  & 6750                 & 6924  & 6790  & 70    & 6750  \\
		exploding-blocksworld.5           & 34447                & 39588 & 39436 & 367   & 34447 \\
		echoring.MaxOffline1              & 43165                & 49571 & 48765 & 1029  & 43165 \\
		wlan\_dl.0.80.deadline            & 5738                 & 6770  & 6591  & 291   & 5738  \\
		pnueli-zuck.5                     & 50128                & 59392 & 59217 & 35822 & 50128 \\
		\midrule
		cartpole                          & 312                  & 369   & 197   & 169   & 197   \\
		10rooms                           & 149                  & 1153  & 1102  & 102   & 149   \\
		helicopter                        & 1349                 & 3825  & 3348  & 598   & 1349  \\
		cruise-latest                     & 2106                 & 2125  & 2115  & 366   & 2106  \\
		dcdc                              & 577                  & 819   & 814   & 155   & 577   \\
%		truck\_trailer                    & TO  & 36186 & 34342 & 14123 & 34342 \\
		traffic\_30m                      & TO & 4547  & 4522  & 79    & 4522\\
\bottomrule
\end{tabular}
%}
\end{table*}

We chose the minimum number of nodes for BDDs from two implementations.
The numbers reported in \cite{dtcontrol2} for lower part of table were obtained by using determinization,
whereas, we are interested in the permissive strategies, i.e., where more action can be taken for a state.
Hence, we executed the BDDs generation using the setting disabling determinization.
We used the \dtcontrol\ artifact for execution.

The benchmarks in the upper part of the table are determinized, so these numbers remain the same.
\cref{table:nodes} shows the results.

\subsection{Effect of Node Sharing}
\label{app:ij}
\Cref{table:ij} shows the number nodes in the PDDs representing policies for Israeli Jalfon randomized self-stabilizing protocol with increasing number of processes.
\begin{table*}[h]
	\caption{Number of nodes in PDDs representing policies for Israeli Jalfon randomized self-stabilizing protocol with increasing number of processes}
	\label{table:ij}
	\centering
	\begin{tabular}{llll}
		\toprule
		Processes & Size   & Total nodes & Shared nodes \\ 
		\toprule
		3         & 7      & 7           & 0            \\
		4         & 15     & 14          & 0            \\
		5         & 31     & 27          & 0            \\
		6         & 63     & 60          & 1            \\
		7         & 127    & 95          & 13           \\
		8         & 255    & 183         & 25           \\
		9         & 511    & 306         & 69           \\
		10        & 1023   & 517         & 222          \\
		11        & 2047   & 929         & 450          \\ 
		12        & 4095   & 1662        & 929          \\
		13        & 8191   & 3029        & 1919         \\
		14        & 16383  & 5522        & 3901         \\
		15        & 32767  & 10118       & 7547         \\
		16        & 65535  & 18797       & 14834        \\
		17        & 131071 & 34572       & 28146       \\
		\bottomrule
	\end{tabular}
\end{table*}

\subsection{Ablation Study on PDD Synthesis Pipeline}\label{app:pdd}
\cref{table:dt-learning} to \cref{table:dontcare} show the data for PDDs at each stage of our pipeline.
\cref{table:total-nodes-ablation} shows the impact on the total number of nodes (including the encoding of action sets) and \cref{table:time-ablation} shows the execution timings of each step of the pipeline.

%% Data generated using:
%% cut -f1-7,9  <(paste <(cut -f1,7,8,11,12,13,14  merged_bdd_results_eiR.csv | sort) <(cut -f1,6  merged_bdd_results_eiR.csv | sort))
\begin{table*}[]
	\caption{Stage 1, Decision tree learning and PDD}
	\label{table:dt-learning}
%	\begin{adjustbox}{width=\textwidth,center}
		\begin{tabular}{l|rrrrrr|S[table-format=-1.2,table-align-text-post=false]}
			\toprule
			& \multicolumn{6}{c|}{Decision Trees}  &\\
			\midrule
			Controller                        & \shortstack{Total \\ Nodes} & \shortstack{Inner \\ Nodes} & \shortstack{Leaf \\ (Action) \\ Nodes} & \shortstack{Unique \\ Actions} & \shortstack{Total \\ Predicates} & \shortstack{Unique \\ Predicates} & \shortstack{Construction \\ Time (s)} \\
			\midrule
10rooms  & 17297 & 8648 & 8649 & 9 & 8648 & 18 & 10.310441 \\
beb.3-4.LineSeized  & 65 & 32 & 33 & 29 & 32 & 22 & 0.203323 \\
blocksworld.5  & 1235 & 617 & 618 & 184 & 617 & 47 & 0.691091 \\
cartpole  & 253 & 126 & 127 & 81 & 126 & 36 & 0.500276 \\
cdrive.10  & 2401 & 1200 & 1201 & 952 & 1200 & 106 & 11.694312 \\
consensus.2.disagree  & 67 & 33 & 34 & 13 & 33 & 14 & 0.201308 \\
cruise-latest  & 987 & 493 & 494 & 3 & 493 & 169 & 17.657515 \\
csma.2-4.some{\textunderscore}before  & 103 & 51 & 52 & 33 & 51 & 25 & 0.458598 \\
dcdc  & 271 & 135 & 136 & 2 & 135 & 122 & 100.461191 \\
eajs.2.100.5.ExpUtil  & 167 & 83 & 84 & 33 & 83 & 50 & 0.864838 \\
echoring.MaxOffline1  & 1869 & 934 & 935 & 401 & 934 & 169 & 22.921879 \\
elevators.a-11-9  & 16327 & 8163 & 8164 & 65 & 8163 & 36 & 9.485567 \\
exploding-blocksworld.5  & 4981 & 2490 & 2491 & 75 & 2490 & 56 & 7.243998 \\
firewire{\textunderscore}abst.3.rounds  & 25 & 12 & 13 & 13 & 12 & 10 & 0.119658 \\
helicopter  & 6339 & 3169 & 3170 & 15 & 3169 & 81 & 51.829528 \\
ij.10  & 1291 & 645 & 646 & 10 & 645 & 10 & 0.429615 \\
pacman.5  & 43 & 21 & 22 & 19 & 21 & 10 & 0.042357 \\
philosophers-mdp.3  & 391 & 195 & 196 & 30 & 195 & 26 & 0.150762 \\
pnueli-zuck.5  & 171371 & 85685 & 85686 & 87 & 85685 & 78 & 61.602507 \\
rabin.3  & 111 & 55 & 56 & 12 & 55 & 29 & 0.120663 \\
rectangle-tireworld.11  & 481 & 240 & 241 & 241 & 240 & 21 & 0.187667 \\
traffic{\textunderscore}30m  & 12573 & 6286 & 6287 & 8 & 6286 & 58 & 3259.388619 \\
triangle-tireworld.9  & 27 & 13 & 14 & 9 & 13 & 9 & 0.021218 \\
%truck{\textunderscore}trailer  &  &  &  &  &  &  &  \\
wlan{\textunderscore}dl.0.80.deadline  & 3369 & 1684 & 1685 & 88 & 1684 & 141 & 16.986641 \\
zeroconf.1000.4.true.correct{\textunderscore}max  & 83 & 41 & 42 & 23 & 41 & 19 & 0.164252 \\

			\bottomrule 
		\end{tabular}
%	\end{adjustbox}
\end{table*}
%%cut -f1-8,10  <(paste <(cut -f1,7,8,16-20  merged_bdd_results_eiR.csv | sort) <(cut -f1,15  merged_bdd_results_eiR.csv | sort))
\begin{table*}[]
	\caption{Stage 2: BDD compilation}\label{table:dd2pdd}
%	\begin{adjustbox}{width=\textwidth,center}
		\begin{tabular}{l|rr|rrrrr|S[table-format=-3.2,table-align-text-post=false]}
			\toprule
			& \multicolumn{2}{c|}{Decision Tree} & \multicolumn{5}{c|}{After BDD Compilation}                                                   &                \\
			\midrule
			Controller                        & \shortstack{Total \\ Nodes}     &  \shortstack{Decision \\ Nodes}   & \shortstack{Total \\ Nodes} & \shortstack{Decision \\ Nodes}  & \shortstack{Shared \\ Nodes}  & \shortstack{Action \\ Nodes}  &  \shortstack{Collapsed \\ Nodes} & {Time (s)} \\
			\midrule
			10rooms & 17297 & 8648 & 1451 & 1332 & 1054 & 119 & 25 & 2.337088 \\
			beb.3-4.LineSeized & 65 & 32 & 496 & 33 & 0 & 463 & 29 & 1.999578 \\
			blocksworld.5 & 1235 & 617 & 19849 & 2646 & 1437 & 17203 & 184 & 3.192066 \\
			cartpole & 253 & 126 & 3831 & 206 & 51 & 3625 & 85 & 2.086475 \\
			cdrive.10 & 2401 & 1200 & 464021 & 9442 & 1011 & 454579 & 952 & 165.191324 \\
			consensus.2.disagree & 67 & 33 & 151 & 48 & 13 & 103 & 13 & 2.037664 \\
			cruise-latest & 987 & 493 & 1102 & 1091 & 512 & 11 & 5 & 2.287071 \\
			csma.2-4.some{\textunderscore}before & 103 & 51 & 671 & 78 & 21 & 593 & 33 & 2.022658 \\
			dcdc & 271 & 135 & 154 & 149 & 14 & 5 & 3 & 1.626269 \\
			eajs.2.100.5.ExpUtil & 167 & 83 & 701 & 108 & 17 & 593 & 33 & 2.135013 \\
			echoring.MaxOffline1 & 1869 & 934 & 85430 & 4429 & 2666 & 81001 & 401 & 10.692982 \\
			elevators.a-11-9 & 16327 & 8163 & 18772 & 16563 & 8327 & 2209 & 65 & 6.24542 \\
			exploding-blocksworld.5 & 4981 & 2490 & 17781 & 14857 & 11241 & 2924 & 75 & 3.976191 \\
			firewire{\textunderscore}abst.3.rounds & 25 & 12 & 115 & 12 & 0 & 103 & 13 & 2.023417 \\
			helicopter & 6339 & 3169 & 11511 & 10294 & 5860 & 1217 & 238 & 2.945867 \\
			ij.10 & 1291 & 645 & 522 & 458 & 220 & 64 & 10 & 2.040617 \\
			pacman.5 & 43 & 21 & 236 & 28 & 0 & 208 & 19 & 1.285176 \\
			philosophers-mdp.3 & 391 & 195 & 804 & 310 & 98 & 494 & 30 & 2.041269 \\
			pnueli-zuck.5 & 171371 & 85685 & 1308316 & 1304402 & 1216526 & 3914 & 87 & 32668.696146 \\
			rabin.3 & 111 & 55 & 169 & 80 & 14 & 89 & 12 & 2.069549 \\
			rectangle-tireworld.11 & 481 & 240 & 29682 & 281 & 30 & 29401 & 241 & 4.012525 \\
			traffic{\textunderscore}30m & 12573 & 6286 & 11145 & 11088 & 9214 & 57 & 12 & 3.222716 \\
			triangle-tireworld.9 & 27 & 13 & 67 & 14 & 1 & 53 & 9 & 1.284613 \\
		%	truck{\textunderscore}trailer &  &  &  &  &  &  &  &  \\
			wlan{\textunderscore}dl.0.80.deadline & 3369 & 1684 & 17766 & 13763 & 10194 & 4003 & 88 & 3.417063 \\
			zeroconf.1000.4.true.correct{\textunderscore}max & 83 & 41 & 358 & 60 & 9 & 298 & 23 & 2.045348 \\

			\bottomrule     
		\end{tabular}
%	\end{adjustbox}
\end{table*}
%%cut -f1-8,10  <(paste <(cut -f1,16,17,22-26  merged_bdd_results_eiR.csv | sort) <(cut -f1,21  merged_bdd_results_eiR.csv | sort))
\begin{table*}[]
	\caption{Stage 3: Exhaustive Reordering}\label{table:reordering}
%	\begin{adjustbox}{width=\textwidth,center}
		\begin{tabular}{l|rr|rrrrr|S[table-format=-3.2,table-align-text-post=false]}
			\toprule
			& \multicolumn{2}{c|}{Previous Stage} & \multicolumn{5}{c|}{After Exhaustive Reordering}                                                   &                \\
			\midrule
			Controller                        & \shortstack{Total \\ Nodes}     &  \shortstack{Decision \\ Nodes}   & \shortstack{Total \\ Nodes} & \shortstack{Decision \\ Nodes}  & \shortstack{Shared \\ Nodes}  & \shortstack{Action \\ Nodes}  &  \shortstack{Collapsed \\ Nodes} & {Time (s)} \\
			\midrule
			10rooms & 1451 & 1332 & 538 & 419 & 273 & 119 & 25 & 86.729078 \\
			beb.3-4.LineSeized & 496 & 33 & 495 & 32 & 0 & 463 & 29 & 104.814762 \\
			blocksworld.5 & 19849 & 2646 & 18945 & 1742 & 750 & 17203 & 184 & 474.34582 \\
			cartpole & 3831 & 206 & 2811 & 172 & 28 & 2639 & 85 & 846.171106 \\
			cdrive.10 & 464021 & 9442 & 458407 & 3828 & 396 & 454579 & 952 & 1442.507889 \\
			consensus.2.disagree & 151 & 48 & 139 & 36 & 2 & 103 & 13 & 79.402642 \\
			cruise-latest & 1102 & 1091 & 564 & 554 & 146 & 10 & 5 & 1018.051323 \\
			csma.2-4.some{\textunderscore}before & 671 & 78 & 653 & 60 & 7 & 593 & 33 & 119.249228 \\
			dcdc & 154 & 149 & 140 & 135 & 0 & 5 & 3 & 366.765855 \\
			eajs.2.100.5.ExpUtil & 701 & 108 & 674 & 81 & 7 & 593 & 33 & 192.267123 \\
			echoring.MaxOffline1 & 85430 & 4429 & 82289 & 1288 & 362 & 81001 & 401 & 714.110685 \\
			elevators.a-11-9 & 18772 & 16563 & 8335 & 6126 & 3321 & 2209 & 65 & 147.226871 \\
			exploding-blocksworld.5 & 17781 & 14857 & 9572 & 6648 & 4270 & 2924 & 75 & 173.124975 \\
			firewire{\textunderscore}abst.3.rounds & 115 & 12 & 115 & 12 & 0 & 103 & 13 & 38.529454 \\
			helicopter & 11511 & 10294 & 6718 & 5577 & 2456 & 1141 & 238 & 732.725871 \\
			ij.10 & 522 & 458 & 516 & 452 & 225 & 64 & 10 & 41.959595 \\
			pacman.5 & 236 & 28 & 231 & 23 & 1 & 208 & 19 & 53.081997 \\
			philosophers-mdp.3 & 804 & 310 & 740 & 246 & 64 & 494 & 30 & 109.693295 \\
			pnueli-zuck.5 & 1308316 & 1304402 & 499947 & 496033 & 423036 & 3914 & 87 & 9014.644852 \\
			rabin.3 & 169 & 80 & 151 & 62 & 4 & 89 & 12 & 118.187025 \\
			rectangle-tireworld.11 & 29682 & 281 & 29675 & 274 & 28 & 29401 & 241 & 622.209227 \\
			traffic{\textunderscore}30m & 11145 & 11088 & 5514 & 5461 & 4291 & 53 & 12 & 95.1932 \\
			triangle-tireworld.9 & 67 & 14 & 67 & 14 & 1 & 53 & 9 & 24.668052 \\
%			truck{\textunderscore}trailer &  &  &  &  &  &  &  &  \\
			wlan{\textunderscore}dl.0.80.deadline & 17766 & 13763 & 7516 & 3513 & 1701 & 4003 & 88 & 1122.084674 \\
			zeroconf.1000.4.true.correct{\textunderscore}max & 358 & 60 & 353 & 55 & 7 & 298 & 23 & 83.612321 \\

			\bottomrule     
		\end{tabular}
%	\end{adjustbox}
\end{table*}
 %%cut -f1-8,10  <(paste <(cut -f1,22,23,28-32  merged_bdd_results_eiR.csv | sort) <(cut -f1,27  merged_bdd_results_eiR.csv | sort))
\begin{table*}[]
	\caption{Stage 4: Consistency Transformation}\label{table:consistency}
%	\begin{adjustbox}{width=\textwidth,center}
		\begin{tabular}{l|rr|rrrrr|S[table-format=-3.2,table-align-text-post=false]}
			\toprule
			& \multicolumn{2}{c|}{Previous Stage} & \multicolumn{5}{c|}{After Consistency Transformation}                                                   &                \\
			\midrule
			Controller                        & \shortstack{Total \\ Nodes}     &  \shortstack{Decision \\ Nodes}   & \shortstack{Total \\ Nodes} & \shortstack{Decision \\ Nodes}  & \shortstack{Shared \\ Nodes}  & \shortstack{Action \\ Nodes}  &  \shortstack{Collapsed \\ Nodes} & {Time (s)} \\
			\midrule
			10rooms & 538 & 419 & 463 & 344 & 211 & 119 & 25 & 0.225848 \\
			beb.3-4.LineSeized & 495 & 32 & 495 & 32 & 0 & 463 & 29 & 0.003077 \\
			blocksworld.5 & 18945 & 1742 & 18729 & 1526 & 544 & 17203 & 184 & 1.770755 \\
			cartpole & 2811 & 172 & 2772 & 133 & 0 & 2639 & 85 & 0.031675 \\
			cdrive.10 & 458407 & 3828 & 458407 & 3828 & 396 & 454579 & 952 & 14.241887 \\
			consensus.2.disagree & 139 & 36 & 136 & 33 & 1 & 103 & 13 & 0.001497 \\
			cruise-latest & 564 & 554 & 489 & 479 & 67 & 10 & 5 & 0.009812 \\
			csma.2-4.some{\textunderscore}before & 653 & 60 & 651 & 58 & 4 & 593 & 33 & 0.006999 \\
			dcdc & 140 & 135 & 140 & 135 & 0 & 5 & 3 & 0.001922 \\
			eajs.2.100.5.ExpUtil & 674 & 81 & 674 & 81 & 6 & 593 & 33 & 0.012782 \\
			echoring.MaxOffline1 & 82289 & 1288 & 82275 & 1274 & 331 & 81001 & 401 & 5.246414 \\
			elevators.a-11-9 & 8335 & 6126 & 8286 & 6077 & 3283 & 2209 & 65 & 5.499225 \\
			exploding-blocksworld.5 & 9572 & 6648 & 9428 & 6504 & 4037 & 2924 & 75 & 9.828777 \\
			firewire{\textunderscore}abst.3.rounds & 115 & 12 & 115 & 12 & 0 & 103 & 13 & 0.000734 \\
			helicopter & 6718 & 5577 & 4417 & 3276 & 546 & 1141 & 238 & 0.243228 \\
			ij.10 & 516 & 452 & 516 & 452 & 225 & 64 & 10 & 0.01987 \\
			pacman.5 & 231 & 23 & 231 & 23 & 0 & 208 & 19 & 0.001599 \\
			philosophers-mdp.3 & 740 & 246 & 699 & 205 & 6 & 494 & 30 & 0.017889 \\
			pnueli-zuck.5 & 499947 & 496033 & 77053 & 73139 & 27431 & 3914 & 87 & 85.846603 \\
			rabin.3 & 151 & 62 & 148 & 59 & 1 & 89 & 12 & 0.002501 \\
			rectangle-tireworld.11 & 29675 & 274 & 29641 & 240 & 0 & 29401 & 241 & 0.177951 \\
			traffic{\textunderscore}30m & 5514 & 5461 & 2550 & 2497 & 1641 & 53 & 12 & 0.306678 \\
			triangle-tireworld.9 & 67 & 14 & 66 & 13 & 0 & 53 & 9 & 0.0006 \\
%			truck{\textunderscore}trailer &  &  &  &  &  &  &  &  \\
			wlan{\textunderscore}dl.0.80.deadline & 7516 & 3513 & 6007 & 2004 & 354 & 4003 & 88 & 2.18757 \\
			zeroconf.1000.4.true.correct{\textunderscore}max & 353 & 55 & 354 & 56 & 5 & 298 & 23 & 0.00434 \\
			
			\bottomrule     
		\end{tabular}
%	\end{adjustbox}
\end{table*}
 %%cut -f1-8,10  <(paste <(cut -f1,28,29,34-38  merged_bdd_results_eiR.csv | sort) <(cut -f1,33  merged_bdd_results_eiR.csv | sort))
\begin{table*}[]
	\caption{Stage 5: Care-set Reduction}\label{table:dontcare}
%	\begin{adjustbox}{width=\textwidth,center}
		\begin{tabular}{l|rr|rrrrr|S[table-format=-3.2,table-align-text-post=false]}
			\toprule
			& \multicolumn{2}{c|}{Previous Stage} & \multicolumn{5}{c|}{After Care-set Reduction}                                                   &                \\
			\midrule
			Controller                        & \shortstack{Total \\ Nodes}     &  \shortstack{Decision \\ Nodes}   & \shortstack{Total \\ Nodes} & \shortstack{Decision \\ Nodes}  & \shortstack{Shared \\ Nodes}  & \shortstack{Action \\ Nodes}  &  \shortstack{Collapsed \\ Nodes} & {Time (s)} \\
			\midrule
			10rooms & 463 & 344 & 463 & 344 & 211 & 119 & 25 & 0.576916 \\
			beb.3-4.LineSeized & 495 & 32 & 495 & 32 & 0 & 463 & 29 & 0.091822 \\
			blocksworld.5 & 18729 & 1526 & 17999 & 796 & 13 & 17203 & 184 & 0.369478 \\
			cartpole & 2772 & 133 & 2765 & 126 & 0 & 2639 & 85 & 0.039679 \\
			cdrive.10 & 458407 & 3828 & 455779 & 1200 & 0 & 454579 & 952 & 0.263466 \\
			consensus.2.disagree & 136 & 33 & 134 & 31 & 1 & 103 & 13 & 0.021182 \\
			cruise-latest & 489 & 479 & 486 & 476 & 66 & 10 & 5 & 119.236493 \\
			csma.2-4.some{\textunderscore}before & 651 & 58 & 646 & 53 & 2 & 593 & 33 & 0.126468 \\
			dcdc & 140 & 135 & 140 & 135 & 0 & 5 & 3 & 44.44782 \\
			eajs.2.100.5.ExpUtil & 674 & 81 & 678 & 85 & 7 & 593 & 33 & 0.714251 \\
			echoring.MaxOffline1 & 82275 & 1274 & 81971 & 970 & 94 & 81001 & 401 & 141.428935 \\
			elevators.a-11-9 & 8286 & 6077 & 7764 & 5555 & 2756 & 2209 & 65 & 1.273282 \\
			exploding-blocksworld.5 & 9428 & 6504 & 6559 & 3635 & 1346 & 2924 & 75 & 44.953039 \\
			firewire{\textunderscore}abst.3.rounds & 115 & 12 & 115 & 12 & 0 & 103 & 13 & 0.004293 \\
			helicopter & 4417 & 3276 & 4299 & 3158 & 486 & 1141 & 238 & 26.675852 \\
			ij.10 & 516 & 452 & 517 & 453 & 222 & 64 & 10 & 0.011955 \\
			pacman.5 & 231 & 23 & 229 & 21 & 0 & 208 & 19 & 0.002571 \\
			philosophers-mdp.3 & 699 & 205 & 697 & 203 & 5 & 494 & 30 & 0.026086 \\
			pnueli-zuck.5 & 77053 & 73139 & 76106 & 72192 & 26941 & 3914 & 87 & 160.173035 \\
			rabin.3 & 148 & 59 & 147 & 58 & 0 & 89 & 12 & 0.061869 \\
			rectangle-tireworld.11 & 29641 & 240 & 29641 & 240 & 0 & 29401 & 241 & 0.011494 \\
			traffic{\textunderscore}30m & 2550 & 2497 & 2403 & 2350 & 1538 & 53 & 12 & 1281.846615 \\
			triangle-tireworld.9 & 66 & 13 & 66 & 13 & 0 & 53 & 9 & 0.000951 \\
%			truck{\textunderscore}trailer &  &  &  &  &  &  &  &  \\
			wlan{\textunderscore}dl.0.80.deadline & 6007 & 2004 & 5827 & 1824 & 199 & 4003 & 88 & 211.476494 \\
			zeroconf.1000.4.true.correct{\textunderscore}max & 354 & 56 & 342 & 44 & 1 & 298 & 23 & 0.018663 \\

			\bottomrule     
		\end{tabular}
%	\end{adjustbox}
\end{table*}
% !TEX root =  ../main.tex

\begin{table*}[h!]
	\centering
	\caption{Total number of nodes in the PDDs after each step of the pipeline}
	\setlength{\tabcolsep}{7pt}
	\begin{adjustbox}{width=\textwidth,center}
		%\resizebox{1.15\textwidth}{!}{
			\begin{tabular}{l|rrrrr}
				\toprule
				Controllers              & DT learning & BDD compilation & Reordering & Consistency transformation & Care-set reduction\\
					\toprule
		10rooms	& 17297	& 1451	& 538	& 463	& 463\\
		cartpole	& 253	& 3831	& 2811	& 2772	& 2765\\
		cruise-latest	& 987	& 1102	& 564	& 489	& 486\\
		dcdc	& 271	& 154	& 140	& 140	& 140\\
		helicopter	& 6339	& 11511	& 6718	& 4417	& 4299\\
		traffic\_30m	& 12573	& 11145	& 5514	& 2550	& 2403\\
		\midrule
		beb.3-4.LineSeized	& 65	& 496	& 495	& 495	& 495\\
		blocksworld.5	& 1235	& 19849	& 18945	& 18729	& 17999\\
		cdrive.10	& 2401	& 464021	& 458407	& 458407	& 455779\\
		consensus.2.disagree	& 67	& 151	& 139	& 136	& 134\\
		csma.2-4.some\_before	& 103	& 671	& 653	& 651	& 646\\
		eajs.2.100.5.ExpUtil	& 167	& 701	& 674	& 674	& 678\\
		echoring.MaxOffline1	& 1869	& 85430	& 82289	& 82275	& 81971\\
		elevators.a-11-9	& 16327	& 18772	& 8335	& 8286	& 7764\\
		exploding-blocksworld.5	& 4981	& 17781	& 9572	& 9428	& 6559\\
		firewire\_abst.3.rounds	& 25	& 115	& 115	& 115	& 115\\
		ij.10	& 1291	& 522	& 516	& 516	& 517\\
		pacman.5	& 43	& 236	& 231	& 231	& 229\\
		philosophers-mdp.3	& 391	& 804	& 740	& 699	& 697\\
		pnueli-zuck.5	& 171371	& 1308316	& 499947	& 77053	& 76106\\
		rabin.3	& 111	& 169	& 151	& 148	& 147\\
		rectangle-tireworld.11	& 481	& 29682	& 29675	& 29641	& 29641\\
		triangle-tireworld.9	& 27	& 67	& 67	& 66	& 66\\
		wlan\_dl.0.80.deadline	& 3369	& 17766	& 7516	& 6007	& 5827\\
		zeroconf.1000.4.true.correct\_max	& 83	& 358	& 353	& 354	& 342\\			
		\bottomrule    
				\end{tabular}
		\end{adjustbox}
		\label{table:total-nodes-ablation}
	\end{table*}
% !TEX root =  ../main.tex

\begin{table*}[h!]
	\centering
	\caption{Time taken (in seconds) in each steps of the pipeline.}
	\setlength{\tabcolsep}{7pt}
	\begin{adjustbox}{width=\textwidth,center}
		%\resizebox{1.15\textwidth}{!}{
			\begin{tabular}{l|*{5}{S[table-format=3.2, round-integer-to-decimal, round-mode = places, round-precision = 2]}}
				\toprule
				Controllers              & {DT learning} & {BDD compilation} & {Reordering} & {Consistency transformation} & {Care-set reduction}\\
				% do not remove the paranthesis
				\toprule
				10rooms	& 10.310441	& 2.337088	& 86.729078	& 0.225848	& 0.576916\\
				cartpole	& 0.500276	& 2.086475	& 846.171106	& 0.031675	& 0.039679\\
				cruise-latest	& 17.657515	& 2.287071	& 1018.051323	& 0.009812	& 119.236493\\
				dcdc	& 100.461191	& 1.626269	& 366.765855	& 0.001922	& 44.44782\\
				helicopter	& 51.829528	& 2.945867	& 732.725871	& 0.243228	& 26.675852\\
				traffic\_30m	& 3259.388619	& 3.222716	& 95.1932	& 0.306678	& 1281.846615\\
				\midrule
				beb.3-4.LineSeized	& 0.203323	& 1.999578	& 104.814762	& 0.003077	& 0.091822\\
				blocksworld.5	& 0.691091	& 3.192066	& 474.34582	& 1.770755	& 0.369478\\
				cdrive.10	& 11.694312	& 165.191324	& 1442.507889	& 14.241887	& 0.263466\\
				consensus.2.disagree	& 0.201308	& 2.037664	& 79.402642	& 0.001497	& 0.021182\\
				csma.2-4.some\_before	& 0.458598	& 2.022658	& 119.249228	& 0.006999	& 0.126468\\
				eajs.2.100.5.ExpUtil	& 0.864838	& 2.135013	& 192.267123	& 0.012782	& 0.714251\\
				echoring.MaxOffline1	& 22.921879	& 10.692982	& 714.110685	& 5.246414	& 141.428935\\
				elevators.a-11-9	& 9.485567	& 6.24542	& 147.226871	& 5.499225	& 1.273282\\
				exploding-blocksworld.5	& 7.243998	& 3.976191	& 173.124975	& 9.828777	& 44.953039\\
				firewire\_abst.3.rounds	& 0.119658	& 2.023417	& 38.529454	& 0.000734	& 0.004293\\
				ij.10	& 0.429615	& 2.040617	& 41.959595	& 0.01987	& 0.011955\\
				pacman.5	& 0.042357	& 1.285176	& 53.081997	& 0.001599	& 0.002571\\
				philosophers-mdp.3	& 0.150762	& 2.041269	& 109.693295	& 0.017889	& 0.026086\\
				pnueli-zuck.5	& 61.602507	& 32668.69615	& 9014.644852	& 85.846603	& 160.173035\\
				rabin.3	& 0.120663	& 2.069549	& 118.187025	& 0.002501	& 0.061869\\
				rectangle-tireworld.11	& 0.187667	& 4.012525	& 622.209227	& 0.177951	& 0.011494\\
				triangle-tireworld.9	& 0.021218	& 1.284613	& 24.668052	& 0.0006	& 0.000951\\
				wlan\_dl.0.80.deadline	& 16.986641	& 3.417063	& 1122.084674	& 2.18757	& 211.476494\\
				zeroconf.1000.4.true.correct\_max	& 0.164252	& 2.045348	& 83.612321	& 0.00434	& 0.018663\\
				\midrule
				Average & 142.949513	& 1316.036564	& 712.8542768	& 5.02760932	& 81.35410836\\
				\bottomrule
			\end{tabular}
		\end{adjustbox}
		\label{table:time-ablation}
	\end{table*}
\clearpage
\end{document}